\documentclass{article}
\input{preamble.tex}

\newtheorem{Theorem*}{Theorem}

\newtheorem{Claim*}[Theorem]{Claim}

\newtheorem{CounterExample*}{$\overline{\hbox{\bf Example}}$}

\newtheorem{Example*}[Theorem]{Example}

\newtheorem{Intuition*}[Theorem]{Intuition}
\newtheorem{Joke*}[Theorem]{Joke}

\newtheorem{Lemma*}[Theorem]{Lemma}
\newtheorem{Open problem}[Theorem]{Open problem}

\newtheorem{Question*}[Theorem]{Question}










\newcommand{\ignore}[1]{}

%

%

%

%

%

%

%

%
%

%





%








\definecolor{light}{gray}{.75}




\def \upto  {{,}\ldots{,}}


\def \sets#1{{\{#1\}}}








\def \Paren#1{{\left({#1}\right)}}












\def \half    {{\frac12}}







\def\ignore#1{}








\newcommand{\bi}{\begin{itemize}}
\newcommand{\ei}{\end{itemize}}



\def\orpro{\mathop{\mathchoice
   {\vee\kern-.49em\raise.7ex\hbox{$\cdot$}\kern.4em}
   {\vee\kern-.45em\raise.63ex\hbox{$\cdot$}\kern.2em}
   {\vee\kern-.4em\raise.3ex\hbox{$\cdot$}\kern.1em}
   {\vee\kern-.35em\raise2.2ex\hbox{$\cdot$}\kern.1em}}\limits}

\def\andpro{\mathop{\mathchoice
 {\wedge\kern-.46em\lower.69ex\hbox{$\cdot$}\kern.3em}
 {\wedge\kern-.46em\lower.58ex\hbox{$\cdot$}\kern.25em}
 {\wedge\kern-.38em\lower.5ex\hbox{$\cdot$}\kern.1em}
 {\wedge\kern-.3em\lower.5ex\hbox{$\cdot$}\kern.1em}}\limits}

\def\simge{\mathrel{%
   \rlap{\raise 0.511ex \hbox{$>$}}{\lower 0.511ex \hbox{$\sim$}}}}

\def\simle{\mathrel{
   \rlap{\raise 0.511ex \hbox{$<$}}{\lower 0.511ex \hbox{$\sim$}}}}


\newcommand{\tcr}[1]{\textcolor{red}{#1}}

\usepackage{microtype}
\usepackage{graphicx}
\usepackage{subcaption}
\usepackage{booktabs} 

\usepackage{hyperref}
\hypersetup{colorlinks,linkcolor={blue},citecolor={blue},urlcolor={red}}  

\usepackage{comment}
\excludecomment{hide}
\includecomment{keep}

\usepackage{selectp}
\makeatletter
\def\namedlabel#1#2{\begingroup
   \def\@currentlabel{#2}%
   \label{#1}\endgroup
}
\makeatother

\setlist[enumerate]{leftmargin=5.5mm}
\title{Optimal Robust Learning of Discrete Distributions from Batches}

\begin{document}
\author{
Ayush Jain and Alon Orlitsky\\
  University of California, San Diego\\
  \texttt{\{ayjain,alon\}@eng.ucsd.edu}
}
\maketitle

\begin{abstract} 
Many applications, including natural language processing, sensor networks, collaborative filtering, and federated learning, call for estimating discrete distributions from data collected in batches, some  of which may be untrustworthy, erroneous, faulty, or even adversarial.

Previous estimators for this setting ran in exponential time, and for some regimes required a suboptimal number of batches. We provide the first polynomial-time estimator that is optimal in the number of batches and achieves essentially the best possible estimation accuracy.
\end{abstract}
\section{Introduction}
\subsection{Motivation}
Estimating discrete distributions from their samples is a fundamental modern-science tenet.
\cite{kamath2015learning} showed that as the number of sample $s$ grows, a $k$-symbol 
distribution can be learned to expected $L_1$ distance $\sim\sqrt{2(k\!-\!1)/(\pi s)}$ that 
we call the \emph{information-theoretic limit}.

In many applications, some samples are inadvertently or maliciously corrupted. A simple and intuitive example shows that this erroneous data limits the extent to which a distribution can be learned, even with infinitely many samples.

Consider the extremely simple case of just two possible binary distributions: $(1,0)$ and $(1-\advfrac,\advfrac)$. An adversary  who observes a $1-\beta$ fraction of the samples and can determine the rest, could use the observed samples to learn the underlying distribution, and set the remaining samples to make the distribution appear to be $(1-\advfrac,\advfrac)$. By the triangle inequality, even with arbitrarily many samples, any estimator for $\targetdis$ incurs an $L_1$ loss $\ge \advfrac$ for at least one of the two distributions. We call this the \emph{adversarial lower bound}.

The example may seem to suggest a pessimistic conclusion. If an adversary can corrupt a $\advfrac$ fraction of the data, a loss $\ge\advfrac$ is unavoidable. Fortunately, that is not necessarily so.

In many applications data is collected in batches, most of which are genuine, but some possibly corrupted. Here are a few examples.
Data may be gathered by sensors, each providing a large amount of data, and some sensors may be faulty. The word frequency of an author may be estimated from several large texts,  some of which are mis-attributed. Or user preferences may be learned by querying several users, but some users may intentionally 
bias their feedback.

Interestingly, even when a $\advfrac$-fraction of the batches are corrupted, the underlying distribution can be estimated to $L_1$ distance much lower than $\advfrac$.
Consider for example just three $n$-sample batches, of which one is chosen adversarially.
The underlying distribution 
can be learned from each genuine batch to expected $L_1$ distance $\sim\sqrt{2(k-1)/(\pi \bsize)}$.
It is easy to see that the average of the two estimates pairwise-closest in $L_1$ distance achieves a comparable expected distance that large batch size $n$ is much lower than $\advfrac$. 

This raises the natural question of whether estimates from even more batches can be combined effectively to estimate distributions to within a distance that is
not only much smaller than the $\advfrac$ achieved when no batch information was utilized, but also significantly smaller than the $O(\sqrt{k/n})$ distance derived above when two batches were used. For example can the underlying distribution be learned to a small $L_1$ distance when, as in many practical examples, $n\le k$?

To formalize the problem,~\cite{qiao2017learning}
considered learning a $\alphsize$-symbol distribution $\targetdis$ whose samples are provided in batches of size $\ge\bsize$. A total of $\btotal$ batches are provided, of which a fraction $\le\beta$ may be arbitrarily and adversarially corrupted, while in every other batch $b$ the samples are drawn according a distribution $\batchdis$ satisfying  $||\batchdis-\targetdis||_1\le\eta$, allowing for the possibility that slightly different distributions generate samples in each batch.

For this adversarial batch setting, they  showed that for any alphabet size $\alphsize\ge2$, and any number $\btotal$ of batches, the lowest achievable $L_1$ distance is $\ge \eta+ \frac\advfrac{\sqrt{2\bsize}}$. We refer to this as the \emph{adversarial batch lower bound}.

For $\advfrac\!<\!1/900$, they also derived an estimation
algorithm that approximates $\targetdis$ to $L_1$ distance
$O(\max\{\eta+\advfrac/\sqrt{\bsize},\allowbreak
\sqrt{(\bsize+\alphsize)/(\bsize\btotal)} 
\})$,
achieving the adversarial batch lower bound.
Surprisingly therefore, 
not only can the underlying distribution be approximated to $L_1$ distance $O(\sqrt{k/n})$ that falls below $\advfrac$, but the distance diminishes as $\advfrac/\sqrt n$, independent of the alphabet size $k$, given sufficient number of batches $m$.

Yet, the algorithm in~\cite{qiao2017learning} had three significant drawbacks. 1) it runs in time exponential in the alphabet size, hence  impractical for most relevant applications; 2) its guarantees are limited to very small fractions of corrupted batches $\advfrac\ge 1/900$, hence do not apply to practically important ranges; 3) with $\btotal$ batches of size $\ge\bsize$ each, the total number of samples is $\ge\bsize\btotal$, and for alphabet size $\alphsize \ll \bsize$, the  algorithm's distance guarantee falls short of the information-theoretic  $\Theta(\sqrt{\alphsize/(\bsize\btotal)})$ limit. 

In this paper we derive an algorithm that 1) runs in polynomial time in all parameters;
2) can tolerate any fraction of adversarial batches $\advfrac < 1/2$, though to derive concrete constant factors in the theoretical analysis, we assume $\advfrac \le 0.4$;
3) achieves distortion 
$O(\max\{\eta+ \advfrac\sqrt{\frac{\log(1/\advfrac)}{{n}}}, \sqrt{\frac{\alphsize}{\bsize\btotal}} \})$ that achieves the statistical limit in terms of the number $\bsize\btotal$ of samples, and is optimal up to a small $O(\sqrt{\log(1/\advfrac)})$ factor from the adversarial batch lower bound.

The algorithm's computational efficiency, enables the first experiments of learning with adversarial batches. We tested the algorithm on simulated data with various  adversarial-batch distributions  
and adversarial noise levels up to $\advfrac= 0.49$. The algorithm runs in a fraction of a second, and as shown in Section~\ref{sec:exp}, estimates $\targetdis$ nearly as well as an oracle that knows the identity of the adversarial batches.

To summarize, the algorithm runs in polynomial time, works for any adversarial fraction $\advfrac<0.5$, is optimal in number of samples, and essentially optimal in batch size. It opens the door to practical robust estimation in sensor networks, federated learning, and collaborative filtering. 

\ignore{
works for and  theoretical guarantees, and is highly practical in terms of all measures, namely run-time, number of batches required, and, the estimation error guarantee; hence attains the gold standard for the problem of robust distribution estimation in a Federated or distributed learning setting.}

\subsection{Problem Formulation} 

Let $\Delta_k$ be the collection of all distributions over $\alphabet=\sets{1\upto\alphsize}$. 
The $\Lone$ distance between two distributions $p,q\in\Delta_k$ is
\[
||p-q||_1
\triangleq
\sum_{i\in \alphabet} |p(i)-q(i)|
=
2\cdot\max_{ \alphsubsetdef} |p(\alphsubset)-q(\alphsubset)|.  
\]
We would like to estimate an unknown \emph{target distribution} $\targetdis\in\Delta_k$ to a small $\Lone$ distance from samples, some of which may be corrupted or even adversarial. 

Specifically, let $B$ be a collections of $m$ batches of $n$ samples each.
Among these batches is an unknown collection of \emph{good batches} $\bgood\subseteq B$; each batch $b\in \bgood$ in this collection has $n$ independent samples $X^{b}_1,X^{b}_2,...,X^{b}_n\sim \probestbatch$ with $||\probestbatch- \targetdis||_1\le \targetdistance $. Furthermore, the batches of samples in $\bgood$ are independent of each other. 

For the special case where $\targetdistance = 0$, all samples in the good batches are generated by the target distribution $\targetdis=\probestbatch$. 
Since the proofs and techniques are essentially the same for $\targetdistance = 0$ and $\targetdistance > 0$, for simplicity of presentation we assume that $\targetdistance = 0$. We briefly discuss, at the end, how these results translate to the case $\targetdistance > 0$.

The remaining set $\badv = \allbatches \setminus \bgood$ of \emph{adversarial batches} consists of arbitrary
$n$ samples each, that may even be chosen by an adversary, possibly based on the samples in the good batches.
Let $\goodfrac=\bgoodsize/\btotal$, 
and $\advfrac=\badvsize/\btotal=1-\goodfrac$ 
be the fractions of good and adversarial batches, respectively. 

Our goal is to use the $\btotal$ batches 
to return a distribution $\finaldis$ such that $||\finaldis-\targetdis||_1$ is small or equivalently $|\subsetprobtarget-\finaldis(\alphsubset)|$ is small for all $\alphsubsetdef$.

\begin{hide}
\subsection{Lower Bounds}
\cite{qiao2017learning} showed an adversarial lower bound on the $L_1$ distance achievable for the above problem. They also show an algorithm which achieves the lower bound within a constant factor although the run time of the algorithm is exponential in $\alphsize$. For completeness, we present the lower bound and the outline of \tcr{a somewhat simplified version of the }proof given in \cite{qiao2017learning}. Note that the lower bound is independent of the alphabet size $\alphsize$.
\begin{theorem} \cite{qiao2017learning} \label{th:lowbou}
For every $\advfrac \in [0,1/2), \,\alphsize\ge 2,\,\bsize\ge 1$, and $\btotal$, for any estimate $\hat p$ 
there is a distribution $p$ such that with probability $\ge 1/2$, $||\targetdis-\hat p||_1 \ge \frac\advfrac{\sqrt{2\bsize}}$.
\end{theorem}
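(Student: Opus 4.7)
The plan is the standard Le~Cam two-point reduction on an alphabet of size~$2$, so the resulting bound is genuinely $\alphsize$-independent. Set $\delta = \advfrac/(2\sqrt{2\bsize})$ and consider the two binary target distributions $\targetdis_1 = (1/2+\delta,\,1/2-\delta)$ and $\targetdis_2 = (1/2-\delta,\,1/2+\delta)$. They satisfy $\dlone{\targetdis_1 - \targetdis_2} = 4\delta = 2\advfrac/\sqrt{2\bsize}$, so by the triangle inequality every realization of the data forces $\max_i \dlone{\hat{\targetdis}-\targetdis_i} \geq \advfrac/\sqrt{2\bsize}$. It therefore suffices to exhibit adversarial strategies for which the observed data have the same distribution under the two candidates.

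The key construction is, for each candidate, an adversarial strategy making the observed-batch distributions identical. Let $P_i$ denote the law of $\bsize$ i.i.d.\ samples from $\targetdis_i$; for the lower bound it suffices to draw each adversarial batch i.i.d.\ from some distribution $Q_i$, which is weaker than the adversary allowed by the model. Each observed batch then has law $M_i = (1-\advfrac)P_i + \advfrac\,Q_i$, and $M_1 = M_2$ is equivalent to $Q_2 - Q_1 = \tfrac{1-\advfrac}{\advfrac}(P_1 - P_2)$. Taking the Hahn--Jordan decomposition $P_1 - P_2 = (P_1-P_2)^+ - (P_1-P_2)^-$ and setting $Q_1 = \tfrac{1-\advfrac}{\advfrac}(P_1-P_2)^- + R$, $Q_2 = \tfrac{1-\advfrac}{\advfrac}(P_1-P_2)^+ + R$ for a common nonnegative measure $R$ of mass $1 - \tfrac{1-\advfrac}{\advfrac}d_{\mathrm{TV}}(P_1,P_2)$, one checks that valid probability distributions $Q_1, Q_2$ exist precisely when $d_{\mathrm{TV}}(P_1, P_2) \leq \advfrac/(1-\advfrac)$.

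The main obstacle is verifying this total-variation bound for our choice of~$\delta$. I would tensorize the squared Hellinger distance via $H^2(P_1, P_2) \leq \bsize\, H^2(\mathrm{Ber}(1/2+\delta), \mathrm{Ber}(1/2-\delta)) = \bsize\bigl(1 - \sqrt{1-4\delta^2}\bigr) \leq 4\bsize\delta^2$, and then invoke $d_{\mathrm{TV}} \leq H\sqrt{2}$ to obtain $d_{\mathrm{TV}}(P_1, P_2) \leq 2\sqrt{2}\,\delta\sqrt{\bsize} = \advfrac$. Since $\advfrac \leq \advfrac/(1-\advfrac)$ for every $\advfrac \in [0, 1/2)$, the required adversarial strategies exist.

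Once $M_1 = M_2$, the estimator $\hat{\targetdis}$ has the same distribution under $\targetdis_1$ and $\targetdis_2$. Define the events $\cA_i = \{\dlone{\hat{\targetdis} - \targetdis_i} \geq \advfrac/\sqrt{2\bsize}\}$; by the triangle inequality they cover the entire sample space, so $\Pr(\cA_1) + \Pr(\cA_2) \geq 1$ under the common data distribution, and at least one of them is $\geq 1/2$. Taking $\targetdis = \targetdis_i$ for that index completes the proof.
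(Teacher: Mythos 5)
Your argument is the paper's own Le~Cam two-point reduction on a binary alphabet, with the identical choice $\delta = \advfrac/(2\sqrt{2\bsize})$ and the identical mixture-indistinguishability construction $(1-\advfrac)P_1 + \advfrac Q_1 = (1-\advfrac)P_2 + \advfrac Q_2$. The only difference is that you rigorously fill in the step the paper explicitly skips --- the existence of valid adversarial laws $Q_1,Q_2$ --- by reducing it to $d_{\mathrm{TV}}(P_1,P_2)\le \advfrac/(1-\advfrac)$ via the Hahn--Jordan decomposition and then verifying that bound through Hellinger tensorization; this is correct and sound.
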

\begin{proof}
Consider $\alphsize=2$. Let $ \gamma =\frac\advfrac{2\sqrt{2\bsize}}$, and let $\targetdis$ be either 
$\text{Bern}(\half+\gamma)$ or $\text{Bern}(\half-\gamma)$.

For every batch, the number of $1$'s is a sufficient statistic for estimating $\targetdis$, and it is distributed  either $B(\bsize, \half+\gamma)$ or $B(\bsize, \half-\gamma)$. The $L_1$ distance between  these distributions is small enough such that the adversary can choose distributions $q_1$ and $q_2$, over number of ones in the adversarial batches, such that
\[
(1-\advfrac)B(\bsize, \half+\gamma)+ \advfrac q_1=(1-\advfrac)B(\bsize, \half- \gamma)+\advfrac q_2. 
\]
We skip the simple proof of this statement. 
Hence, if the good batches are distributed as $B(\bsize, \half+\gamma)$ then adversary chooses $q_1$ as distribution of the adversarial batches and if good batches are distributed as $B(\bsize, \half-\gamma)$ then adversary chooses $q_2$ and in both the cases the resultant joint distribution of all the batches is same. Hence the two cases are indistinguishable. Then $||\text{Bern}(\half+\gamma)-\text{Bern}( \half-\gamma)||_1 = 2|\half+\gamma -  ( \half-\gamma) |= 4\gamma$. From the triangle inequality, any estimate $\hat \targetdis$ will be at a distance at least $\min\{||\text{Bern}(\half+\gamma)-\hat \targetdis||,\ ||\text{Bern}(\half-\gamma)-\hat \targetdis||_1\} \ge 2\gamma$ from one of the possible distribution of the batches.
\end{proof}
The theorem implies that even with access to infinitely many batches, even for an alphabet of size as small as $2$, no algorithm can estimate $\targetdis$ to  $L_1$ distance below 
$\Omega(\advfrac/\sqrt{\bsize})$ with probability $1/2$. In the next section, we show that this bound can essentially be met by a polynomial-time algorithm.

\end{hide}

\subsection{Result Summary}
In section~\ref{sec:algmajg} we derive a polynomial-time algorithm that returns an 
estimate $p^*$ of $\targetdis$ with the following properties.
\begin{theorem}\label{th:mainresult}
For any given $\advfrac\le 0.4$, $\bsize$, $\alphsize$, and
$\btotal\ge O(\frac{\alphsize }{\advfrac^2 \log(1/\advfrac)})$,
Algorithm~\ref{alg1} runs in time polynomial in all parameters and its estimate $p^*$ satisfies $||p^*-\targetdis||_1 \le 100 \advfrac \sqrt{\frac{\log (1/\advfrac)}{\bsize}}$ with probability $\ge 1- O(e^{-\alphsize})$.   
\end{theorem}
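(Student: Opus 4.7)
The plan is to leverage the identity $\|p^*-\targetdis\|_1 = 2\max_{\alphsubsetdef}|p^*(\alphsubset)-\targetdis(\alphsubset)|$, which reduces the $L_1$ error to controlling subset-mass errors uniformly over all $\alphsubset \subseteq \alphabet$. At a high level the argument decomposes into three ingredients: sharp concentration for the uncorrupted batches, a filtering-style invariant that links any bias introduced by adversarial batches to excess empirical variance, and a termination guarantee bounding the residual error.

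First I would isolate a high-probability event on which every $\alphsubset\subseteq\alphabet$ behaves well on the good batches $\bgood$: both the empirical mean $\empprobsub{\bgood}$ lies within $O(\advfrac\sqrt{\log(1/\advfrac)/\bsize})$ of $\subsetprobtarget$, and the empirical variance of $\{\bempprob : b\in\bgood\}$ is at most $O(\subsetprobtarget(1-\subsetprobtarget)/\bsize)$. Both follow from Bernstein-type tail bounds together with a union bound over the $2^\alphsize$ subsets, contributing a factor $\exp(\alphsize)$ that is absorbed by the assumed batch count $\btotal = \Omega(\alphsize/(\advfrac^2\log(1/\advfrac)))$; this precisely yields the $\exp(-\Omega(\alphsize))$ failure probability.

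Next I would analyze the algorithm as a filter maintaining a working subset $\Bsc$ of batches with $p^*(\alphsubset) := \empprobsub{\Bsc}$. If $p^*$ violates the target error, then for some witness $\alphsubset$, decomposing the deviation into contributions from $\bgood\cap\Bsc$ and $\badv\cap\Bsc$ on the good event shows the adversarial batches in $\Bsc$ must introduce a bias of magnitude $\Omega(\advfrac\sqrt{\log(1/\advfrac)/\bsize})$. Since each $\bempprob\in[0,1]$, concentrating such bias on an $\le\advfrac$-fraction of batches forces the empirical variance on $\Bsc$ to exceed the good-batch level by at least $\Omega(\advfrac\log(1/\advfrac)/\bsize)$; the filter detects such an excess and downweights batches whose $\bempprob$ is farthest from the median on $\Bsc$. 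A potential-function argument then shows that strictly more adversarial than good mass is removed in each round, so after polynomially many iterations no $\alphsubset$ witnesses excess variance, and the contrapositive gives $|p^*(\alphsubset)-\subsetprobtarget|\le 50\advfrac\sqrt{\log(1/\advfrac)/\bsize}$ for all $\alphsubset$, yielding the claimed $L_1$ bound.

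The main obstacle I expect is efficiency: with $2^\alphsize$ subsets a direct search for a witness $\alphsubset$ is not polynomial. The algorithm must therefore reduce this search to a top-eigenvector computation on the empirical second-moment matrix of $\{\Vbempprob - \VUempprob : b\in\Bsc\}$ in $\reals^\alphsize$, recovering a candidate $\alphsubset$ by thresholding the dominant eigenvector. Showing that this spectral relaxation remains tight enough to certify the Bernstein-level accuracy $\advfrac\sqrt{\log(1/\advfrac)/\bsize}$, rather than the weaker Chebyshev accuracy $\advfrac/\sqrt{\bsize}$, is the heart of the analysis and accounts for the $\sqrt{\log(1/\advfrac)}$ overhead above the adversarial batch lower bound.
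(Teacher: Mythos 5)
Your high-level architecture --- concentration for good batches uniformly over all $2^\alphsize$ subsets, a variance-based detection criterion, and an iterative filter --- matches the paper's, and your observation that adversarial bias of order $\advfrac\sqrt{\log(1/\advfrac)/\bsize}$ must inflate the empirical variance by $\Omega(\advfrac\log(1/\advfrac)/\bsize)$ is precisely the paper's Lemma~\ref{lem:trans}. However, your description of the efficient detection step has two genuine gaps.

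First, you propose to run the detector on the empirical second-moment (covariance) matrix $C^{EV}_{\Bsc}$ of the centered batch statistics $\{\Vbempprob-\VUempprob\}$. That matrix alone cannot be the right object: the baseline variance $\var{\subsetprobtarget}=\subsetprobtarget(1-\subsetprobtarget)/\bsize$ varies dramatically across subsets $\alphsubset$, so an uncorrupted subset with $\subsetprobtarget$ near $1/2$ will exhibit far larger variance than a heavily corrupted subset with $\subsetprobtarget$ near $0$. The paper handles this by subtracting off a ``mean-induced'' variance matrix $C^{EM}_{\Bsc}$ (the covariance that $\Vbempprob$ \emph{would} have if each batch drew $\bsize$ i.i.d.\ samples from $\VUempprob$) and searching for a subset maximizing $|\langle D_{\Bsc}, xx^\intercal\rangle|$ with $D_{\Bsc} = C^{EV}_{\Bsc}-C^{EM}_{\Bsc}$. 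Without this subtraction your excess-variance criterion cannot be turned into a well-posed search problem over subsets.

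Second, even granting the right matrix, ``top-eigenvector computation plus thresholding'' does not come with an approximation guarantee for the combinatorial maximization $\max_{x\in\{0,1\}^\alphsize}|\langle D_{\Bsc},xx^\intercal\rangle|$. The top eigenvector solves the continuous relaxation over unit vectors, but rounding it by thresholding can lose much more than a constant factor, and the matrix $D_{\Bsc}$ is not positive semidefinite, so Rayleigh-quotient reasoning does not transfer. The paper instead rewrites the objective over $\{-1,1\}^\alphsize$ (valid because $D_{\Bsc}\mathbf{1}=\mathbf{0}$) and invokes the Alon--Naor semidefinite relaxation with randomized rounding via Grothendieck's inequality, which gives the needed $0.56$ multiplicative approximation in polynomial time; that constant is then propagated through the termination thresholds $75\kappa_G$ and $150\kappa_G$. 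Your plan would need to either justify a comparable rounding guarantee for the spectral heuristic or switch to the SDP-based approach. (Your deterministic ``delete farthest-from-median'' filter is a plausible alternative to the paper's randomized corruption-score sampling, but you would then need a different argument than Lemma~\ref{lem:alg2char}'s $0.95$ probability bound to show good batches are rarely deleted.)
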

The  theorem implies that our algorithm can achieve the adversarial lower bound to a small factor of $O(\sqrt{\log(1/\advfrac)})$ using the optimal number of samples. The next corollary shows that when the number of samples is not enough to achieve the adversarial batch lower bound our algorithm achieves the statistical lower bound.

If the number of batches $\btotal \ll \frac{\alphsize }{\advfrac^2 \log(1/\advfrac)}$,  then let $\advfrac_*$ such that $\btotal=\Theta\Paren{ \frac{\alphsize}{\advfrac_*^2 \log(1/\advfrac_*)}}$.
Clearly, $\advfrac_*\gg \advfrac$.
From Theorem~\ref{th:mainresult}, the algorithm would achieve a distance $O\Paren{\advfrac_* \sqrt{\frac{\log (1/\advfrac_*)}{\bsize}}} = O\Paren{\sqrt{\frac{\alphsize}{\bsize\btotal}}}$.
We get the following corollary.
\begin{corollary}
For any given $\advfrac\le0.4$, $\bsize$ and $\alphsize$
and~$\btotal\gg\alphsize$, Algorithm~\ref{alg1} runs in polynomial time, and its estimate $p^*$ satisfies $||p^*-\targetdis||_1 \le O(\max\{\advfrac \sqrt{\frac{\ln (1/\advfrac)}{\bsize}}, \sqrt{\frac{\alphsize}{\btotal\bsize}}\})$ with probability $\ge 1-O(e^{-\alphsize})$.  
\end{corollary}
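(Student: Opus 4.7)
The plan is to split the argument based on whether the number of batches $\btotal$ suffices to invoke Theorem~\ref{th:mainresult} directly with the true adversarial fraction $\advfrac$. Let $C$ be the implicit constant hidden in the theorem's $O$-notation, so the theorem requires $\btotal \ge C\alphsize/(\advfrac^2 \log(1/\advfrac))$. In the large-$\btotal$ regime where this condition holds, the theorem applies verbatim and delivers the bound $100\,\advfrac\sqrt{\log(1/\advfrac)/\bsize}$, which matches the first term inside the $\max$.

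For the complementary small-$\btotal$ regime, I would introduce a virtual adversarial fraction $\advfrac_* > \advfrac$ chosen so that $\btotal = \Theta(\alphsize/(\advfrac_*^2\log(1/\advfrac_*)))$ with the same proportionality constant $C$. The hypothesis $\btotal \gg \alphsize$ forces $\advfrac_*$ to be small; in particular $\advfrac_* \le 0.4$, so Theorem~\ref{th:mainresult} is applicable with parameter $\advfrac_*$. Since an adversary controlling an $\advfrac$-fraction of batches is a strict special case of one controlling an $\advfrac_*$-fraction (it may simply leave the extra $(\advfrac_* - \advfrac)\btotal$ batches unchanged), running the algorithm tuned to withstand an $\advfrac_*$-fraction corruption still yields a valid estimate, and the theorem guarantees
\begin{equation*}
||p^*-\targetdis||_1 \le 100\,\advfrac_*\sqrt{\log(1/\advfrac_*)/\bsize}.
\end{equation*}

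Rearranging the defining identity gives $\advfrac_*^2 \log(1/\advfrac_*) = \Theta(\alphsize/\btotal)$, hence $\advfrac_*\sqrt{\log(1/\advfrac_*)/\bsize} = \Theta(\sqrt{\alphsize/(\bsize\btotal)})$, which is the second term of the stated maximum. Combining the two regimes yields the desired $O(\max\{\cdots\})$ bound, and the $O(e^{-\alphsize})$ failure probability is inherited unchanged from Theorem~\ref{th:mainresult}. The only conceptual subtlety is justifying that the algorithm, when invoked with an overestimate $\advfrac_*$ of the true corruption fraction, still succeeds; this is immediate from the observation that any worst-case guarantee against a stronger adversary applies a fortiori against a weaker one, so no additional analysis of the algorithm itself is needed.
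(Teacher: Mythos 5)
Your proposal is correct and follows essentially the same route as the paper: split on whether $\btotal$ is large enough to apply Theorem~\ref{th:mainresult} with the true $\advfrac$, and in the small-$\btotal$ regime introduce $\advfrac_* > \advfrac$ defined by $\btotal=\Theta\big(\alphsize/(\advfrac_*^2\log(1/\advfrac_*))\big)$, apply the theorem with $\advfrac_*$, and observe that $\advfrac_*\sqrt{\log(1/\advfrac_*)/\bsize}=\Theta\big(\sqrt{\alphsize/(\bsize\btotal)}\big)$. You actually fill in two details the paper leaves implicit --- that $\btotal\gg\alphsize$ guarantees $\advfrac_*\le 0.4$ so the theorem's hypothesis is met, and that the guarantee against an $\advfrac_*$-adversary subsumes the guarantee against the actual weaker $\advfrac$-adversary --- so the argument is, if anything, slightly more careful than the original.
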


Note that our polynomial time algorithm achieves the statistical limits for $L_1$ distance and achieves the adversarial batch lower bounds to a small multiplicative factor of $O(\sqrt{\log(1/\advfrac)})$.  

\subsection{Comparison to Recent Results and techniques}

In a paper concurrent and independent of this work,~\cite{chen2019efficiently} propose an algorithm that uses the sum of squares methodology to estimates $\targetdis$ to the same distance as ours.
Their algorithm need $\tilde O( \frac{(\bsize\alphsize)^{O(\log(1/\advfrac))}}{\advfrac^4} )$ batches and has a run-time $\tilde O(\frac{(\bsize\alphsize)^{O(\log^2(1/\advfrac))}}{\advfrac^{O(\log (1/\advfrac))}} )$. Both the sample complexity and run time are much higher than ours, and is quasi-polynomial. They also consider certain structured distributions, not addressed in this paper, for which they provide an algorithm with similar run time, but lower sample complexity.

\subsection{Other Related Work}
The current results extend several long lines of work on learning distributions and their properties.

The best approximation of a distribution with a given number of samples was determined up to the exact first-order constant for KL loss~\cite{braess2004bernstein}, and $L_1$ loss and $\chi^2$ loss~\cite{kamath2015learning}. 
These settings do not allow adversarial examples, and some modification of the empirical estimates of the samples is often shown to be near optimal. 
This is not the case in the presence of adversarial samples, where the challenge is to devise algorithms that are efficient from both computational and sample viewpoints. 

Our results also relate to classical robust-statistics work~\cite{tukey1960survey,huber1992robust}. There has also been significant recent work leading to practical distribution learning algorithms 
that are robust to adversarial contamination of the data.
For example,~\cite{diakonikolas2016robust,lai2016agnostic} presented algorithms for learning the mean and covariance matrix of high-dimensional sub-gaussian and other
distributions with bounded fourth moments 
in presence of the adversarial samples. Their estimation guarantees are typically in terms of $L_2$, and do not yield the $L_1$- distance results required for discrete distributions. 

The work was extended in~\cite{charikar2017learning} to the case when more than half of the samples are adversarial.
Their algorithm returns a small set of candidate distributions one of which is a good approximate of the underlying distribution. For more extensive survey on robust learning algorithms in the continuous setting,  see~\cite{steinhardt2017resilience,diakonikolas2019robust}.

Another motivation for this work derives 
from the practical federated-learning problem, where information arrives in batches~\cite{mcmahan2016communication,mcmahan2017mac}. 
\subsection{Preliminaries}
We introduce notation that will help outline our approach and will be used in rest of the paper.

Throughout the paper, we use $\Bsc$ to denote a sub-collection of batches in $\allbatches$ and use $\Gsc$ and $\Asc$ for a sub-collection of batches in $\bgood$ and $\badv$, respectively. And $\alphsubset$ is used to denote a subset of $\alphabet$, we abbreviate singleton set of $\alphabet$ such as $\{j\}$ by $j$.

For any batch $b\in B$, we let $\Vbempprob$ denote the empirical measure defined by samples in batch $b$. And for any sub-collection of batches $\Bsc \subseteq \allbatches $, let $\VUempprob $ denote the empirical measure defined by combined samples in all the batches in $\Bsc$. We use two different symbols to distinguish the empirical distribution defined by an individual batch and the empirical distribution defined by a sub-collection of batches. Let $\mathbf{1}_S(.)$ denote the indicator random variable for set $\alphsubset$. Thus, for any subset $\alphsubsetdef$,
\[
\bempprob \triangleq \frac1\bsize\sum_{i\in [n]}\mathbf{1}_S(X_{i}^b) 
\]
and
\[
\Uempprob \triangleq  \frac1{|\Bsc|\bsize}\sum_{b\in \Bsc}\sum_{i\in [n]}\mathbf{1}_S(X_{i}^b) = \frac{1}{|\Bsc |}\sum_{b\in \Bsc }\bempprob.
\]
Note that $\VUempprob$ is the mean of the empirical measures $\Vbempprob$ defined by the batches $b\in\Bsc$.
For subset $\alphsubsetdef$, let $\med$ be the median of the set of estimates $\{\bempprob : b\in \allbatches \}$. Note that the median has been computed using the estimates $\bempprob$ for all the batches in $b\in \allbatches$.

For $r\in[0,1]$, we let $\var r \triangleq  \frac{r(1-r)}{\bsize}$,  which we use to denote the variance of sum of $\bsize$ i.i.d. random variables distributed according to $\text{Bernoulli}(r)$.

We pause briefly to note the following two properties of the function $\var r$ that we use later. 
\begin{equation}\label{eq:fineq}
 \forall\, r,s\in [0,1],\, \var r\le \frac{1}{4\bsize}\text{ and }|\var r-\var s|\le \frac{|r-s|}\bsize.
\end{equation}
Here the second property made use of the fact that the derivative $|V'(r)|\le 1/\bsize,\, \forall\, r \in [0,1] $.

For $b\in \bgood$, $\mathbf{1}_S(X_{i}^b)$ for $i\in[\bsize]$ are i.i.d. with distribution $\mathbf{1}_S(X_{i}^b) \sim \text{Bernoulli}(\subsetprobtarget)$. 
For $b\in \bgood$, since $\bempprob$ is average of $\mathbf{1}_S(X_{i}^b)$, $i\in[n]$, therefore, 
\[
E[\,\bempprob\,] = \subsetprobtarget \quad\text{ and } \quad E[(\bempprob-\subsetprobtarget)^2] = \var\subsetprobtarget.
\]

For any collection of batches $\Bsc  \subseteq \allbatches $ and subset $\alphsubsetdef$, the empirical probability $\bempprob$ of $\alphsubset$ based on batches $b\in \Bsc$ will differ for the different batches.
The empirical variance of these empirical probabilities $\bempprob$ for batches $b\in \Bsc$ is denoted as
\[
 \empvarsub {\Bsc} \triangleq  \frac{1}{|{\Bsc}|}\sum_{b\in {\Bsc}}(\bempprob- \Uempprob)^2. 
\]
\begin{hide}
\tcr{We use $\var{\Uempprob}$ as an estimate of the variance of $\bempprob$ for the batches in ${\Bsc}$ and will refer to it as the \emph{mean-induced variance estimate}. This is because in the absence of outliers, or ${\Bsc} = \bgood$, and $\bgoodsize$ large enough, $\empprobsub \bgood \rightarrow \subsetprobtarget$, hence, $\var{\empprobsub\bgood} \rightarrow \var\subsetprobtarget$.}
\end{hide}

\subsection{Organization of the Paper}
In Section~\ref{sec:algmajg} we present the algorithm, its analysis along with the key insights used in developing the algorithm. Section~\ref{sec:exp} reports the performance of the algorithm on experiments performed on the simulated data.

\begin{hide}
Section~\ref{sec:goodprop} 
shows that
with high probability, the collection of good batches satisfies some useful statistical properties.
Section~\ref{sec:algmajg} assumes that the good batches display these properties, and proposes an algorithm that can always estimate $\targetdis$ to a small $L_1$ distance for any choice of adversarial batches. 
This section rely on a key procedure that can efficiently find subsets $\alphabet$ whose statistical properties were disturbed by adversarial batches, we discuss this procedure in Section~\ref{sec:det}. 
In Section~\ref{sec:} we briefly discuss the modification required in the algorithm when the distribution of good batches may vary a little from the target distribution. 
\end{hide}

\begin{hide}
\section{Overview of the Technique}
We give a simplified and informal description of our algorithm to illustrate the key idea.
At a high level, our algorithm removes the adversarial batches which are "outliers", possibly loosing a small number of good batches as well in the process. This outlier removal method forms the backbone of many robust learning algorithms. Notably~\cite{?} used this idea to learn the mean of a high dimensional sub-gaussian distribution from to a small $L_2$ distance. 

The main challenge in designing a robust learning algorithm to find the outlier batches efficiently and a number of new ideas are needed to identify the outlier batches in the setting considered. 

We illustrate the difficulty of identifying the adversarial batches.
For an individual batch, in general, it is not possible to determine whether the batch consists of samples from the distribution or a distribution at a large TV distance. 
Even for the most good batch $\bempprob$ will differ significantly from $\subsetprobtarget$ and for some set $\alphsubset$ among the $2^\alphsize$ subsets of $\alphabet$, making it hard to identify the adversarial batches.
For example, consider the batches of samples from a uniform distribution over $\alphsize$ the empirical distribution of the samples in any batch of size $\bsize$ is at an $L_1$ distance $>2- 2\bsize/\alphsize$, which can be arbitrary close to 1, for the distributions with large domain size $\alphsize$. To address this challenge, we use the following observation.  

We observe that for a fixed subset $\alphsubsetdef$ and a good batch $b\in \bgood$, $\bempprob$ has a sub-gaussian distribution  $\text{subG}(\subsetprobtarget,\frac{1}{4{\bsize}})$ and the variance is $\var{\subsetprobtarget}$. Therefore, for a fixed subset $\alphsubset$ most of the good batches assign the empirical probability  $\bempprob \in \subsetprobtarget \pm O(1/\sqrt{\bsize})$. Moreover, the mean and the variance of $\bempprob$ for $b\in\bgood$ converges to the expected values $\subsetprobtarget$ and $\var{\subsetprobtarget}$, respectively. 

The collection of all batches $\allbatches$ also has $\advfrac-$fraction of unknown adversarial batches along with the good batches. If for the adversarial batches in $\allbatches$ the average difference between $\bempprob$ and $\subsetprobtarget$ is within a few standard deviations $\tilde O(\frac{1}{\sqrt{\bsize}})$, then these adversarial batches can only deviate the overall mean of empirical probabilities $\bempprob$ by $\tilde O(\frac{\advfrac}{\sqrt{\bsize}})$ from $\subsetprobtarget$. 
The combined mean of $\bempprob$ for all batches deviates significantly from $\subsetprobtarget$ only if a large number of adversarial batches $b\in\badv$ assigns empirical probability $\bempprob$ that differ from $\subsetprobtarget$ by quantity much larger than the standard deviation $\tilde O(\frac{1}{\sqrt{\bsize}})$.
We want to find such subsets $\alphsubset$ to identify the adversarial batches efficiently.
\end{hide}

\begin{hide}

To identify the affected subsets $\alphsubset$, we first note that the presence of such adversarial batches also increases the empirical variance $\empvarsub{\allbatches}$ of the empirical probabilities $\bempprob$ from its expected value $\var{\subsetprobtarget}$. 
But the affected subset $\alphsubset$ may still not have the largest empirical variance $\empvarsub{\allbatches}$ among the all subsets of $\alphabet$, because, the expected value of variance of $\bempprob$ for some subsets $\alphsubset$ is large while for other subsets it is small. Hence finding the subset $\alphsubset$ with the largest variance doesn't work. We address this challenge using the following key observation.

Recall that the mean of empirical probabilities $\bempprob$ for good batches $b$ converges, or equivalently $\empprobsub \bgood \rightarrow \subsetprobtarget$. This implies $\var{\empprobsub\bgood} \rightarrow \var\subsetprobtarget$. Also since the empirical variance $\empvarsub{\bgood}$ converges to $\var\subsetprobtarget$, we get $\empvarsub{\bgood}- \var{\empprobsub\bgood} \rightarrow 0$. Therefore, without corruption by the adversarial batches the difference between two estimators of the variance would be small for all subsets $\alphsubsetdef$, and its large value, we show in Lemma~\ref{lem?}, can reliably detect the adversarial corruption.

After reducing the problem of finding the affected subset to finding the subset with large value of the difference $\empvarsub{\bgood}- \var{\empprobsub\bgood}$, we propose an efficient way to find such a subset efficiently among $2^k$ subsets in Section~\ref{sec:det}, which may be of independent interest.

Once the algorithm finds the affected subset $\alphsubset$, it identifies a sub-collection of outlier batches that has adversarial batches in a larger proportion.
These outlier batches roughly correspond to the batches for which the absolute difference between $\bempprob$ and $\subsetprobtarget$ is much larger than the typical value $\tilde O(\frac{1}{\sqrt{\bsize}})$. 
Algorithm remove these outlier batches and repeat this procedure on the remaining batches to remove outliers for other subsets $\alphsubset$.

After removing outlier for each affected subset $\alphsubset$, we finally get $\Bsc_f$ such that $|{\bar{\targetdis}_{\Bsc_f}(\alphsubset)} -\subsetprobtarget| \le \tilde O(\advfrac\sqrt{\frac{\log (1/\advfrac)}\bsize})$ for all $\alphsubsetdef$. In the next section, we give the pseudocode of the algorithm along with the key lemmas used in analyzing the algorithm.

Looking at this difference allows us to find more subtle changes efficiently, which were not detectable due to the variance varying across different $\alphsubset$. So if algorithm looks only for subsets whose variance is unusually high than adversary can affect the em

Since without the adversarial corruption this quantity is uniformly small for all subsets it allows us to detect more subtle deviations.

\end{hide}

\section{Algorithm and its Analysis}\label{sec:algmajg}

At a high level, our algorithm removes the adversarial batches --- which are "outliers" --- possibly losing a small number of good batches as well in the process. The outlier removal method forms the backbone of many robust learning algorithms. Notably~\cite{diakonikolas2016robust,diakonikolas2017being} have used this idea to learn the mean of a high dimensional sub-gaussian distribution up to a small $L_2$ distance, even in an adversarial setting.
The main challenge in designing a robust learning algorithm is actually the task of finding the outlier batches efficiently. Several new ideas are needed to identify the outlier batches in the setting considered here. 

\begin{hide}
If the alphabet size is small then probably one can achieve this successfully.   
We illustrate this with a simple examples, consider $\alphsize =2$ and $\targetdis = (p(1),p(2))$. To estimate $\targetdis$ is same as estimating $p(1)$. To estimate $p(1)$ first note that the sufficient statistics for a batch $b$ is number of ones divided by the number of samples $n$ in batch, which is $\Vbempprob(1)$. Observe that for $b\in\bgood$, $n\Vbempprob(1)\sim \text{Ber}(p(1),n)$. $\Vbempprob(1)$ has sub-gaussian distribution with $\subG(p(1),1/4n)$. If the adversarial batches also have $\Vbempprob(1)$, close to 
\end{hide}

We begin by illustrating the difficulty of identifying the adversarial batches. Even if $\targetdis$ is known, in general, one cannot determine whether a batch $b$ has samples from $\targetdis$ or from a distribution at a large $L_1$ distance from $\targetdis$. The key difficulty is that,
for a batch having $\bsize$ samples from $\targetdis$, typically the difference between $\bempprob$ and $\subsetprobtarget$ is large for some of the subsets among $2^\alphsize$ subsets of $\alphabet$.
For example, consider batches of samples from a uniform distribution over $\alphsize$.  The empirical distribution of the samples in any batch of size $\bsize$ is at an $L_1$ distance $\ge2(1- \bsize/\alphsize)$, which for the distributions with large domain size $\alphsize$ can be up to two, which is the maximum $L_1$ distance between two distributions. To address this challenge, we use the following observation.  

For a fixed subset $\alphsubsetdef$ and a good batch $b\in \bgood$, $\bempprob$ has a sub-gaussian distribution  $\text{subG}(\subsetprobtarget,\frac{1}{4{\bsize}})$ and the variance is $\var{\subsetprobtarget}$. Therefore, for a fixed subset $\alphsubset$, most of the good batches assign the empirical probability  $\bempprob \in \subsetprobtarget \pm \tilde O(1/\sqrt{\bsize})$. Moreover, the mean and the variance of $\bempprob$ for $b\in\bgood$ converges to the expected values $\subsetprobtarget$ and $\var{\subsetprobtarget}$, respectively. 

The collection of batches $\allbatches$ along with good batches also includes a sub-collection $\badv$ of adversarial batches that constitute up to an $\advfrac-$fraction of $\allbatches$. If for adversarial batches $b\in\badv$, the average difference between $\bempprob$ and $\subsetprobtarget$ is within a few standard deviations $\tilde O(\frac{1}{\sqrt{\bsize}})$, then these adversarial batches can only deviate the overall mean of empirical probabilities $\bempprob$ by $\tilde O(\frac{\advfrac}{\sqrt{\bsize}})$ from $\subsetprobtarget$. 
Hence, the mean of $\bempprob$ will deviates significantly from $\subsetprobtarget$ only if for a large number of adversarial batches $b\in\badv$ empirical probability $\bempprob$ differ from $\subsetprobtarget$ by quantity much larger than the standard deviation $\tilde O(\frac{1}{\sqrt{\bsize}})$.

We quantify this effect by defining the \emph{corruption score}. 
For a subset $\alphsubsetdef$, let 
\[
\med \triangleq \text{median}\{\bempprob : b\in \allbatches\}.
\]
For a subset $\alphsubsetdef$ and a batch $b$, \emph{corruption score} $\indcorruption$ is defined as
\[
\indcorruption
\triangleq
\begin{cases}
0, \ \ \ \text{ if }\ |\bempprob- \med| \le 3 \sqrt{\frac{\ln (6e/\advfrac)}\bsize} ,\\
(\bempprob- \med)^2, \ \ \  \text{ else}.
\end{cases}
\]
Because $\subsetprobtarget$ is not known, the above definition use median of $\bempprob$ as its proxy. 

From the preceding discussion, it follows that for a fixed subset $\alphsubsetdef$, corruption score of most good batches w.r.t. $\alphsubset$ is zero, 
and adversarial batches that may have a significant effect on the overall mean of empirical probabilities have high corruption score $\indcorruption$.

\begin{hide}
For a subset $\alphsubsetdef$, let 
\[
U(\alphsubset) \triangleq \{b\in \allbatches : |\bempprob- \med| \ge 4 \sqrt{\frac{\ln (6e/\advfrac)}\bsize} \}.
\]
$U(\alphsubset)$ is a collection of \emph{suspicious batches} for a set $\alphsubset$. 
\end{hide}

The \emph{corruption score} of a sub-collection $\Bsc$ w.r.t. a subset $\alphsubset$ is defined as the sum of the \emph{corruption score} of batches in it, namely
\[
\genoverallcorruptionsub  \triangleq \sum_{b\in \Bsc} \indcorruption.
\]
A high corruption score of $\Bsc$ w.r.t. a subset $\alphsubset$ indicates the presence of many batches $b\in\Bsc$ for which the difference $|\bempprob- \med|$ is large. Finally, for a sub-collection $\Bsc$ we define \emph{corruption} as 
\[
\genoverallcorruption \triangleq\max_{\alphsubsetdef}\genoverallcorruptionsub .
\]
Note that removing batches from a sub-collection reduces corruption. We can simply make corruption zero by removing all batches, but we would lose all the information as well. 
The proposed algorithm reduces the corruption below a threshold by removing a few batches while not sacrificing too many good batches in the process.  

The remainder of this section assumes that the sub-collection of good batches $\bgood$ satisfies certain deterministic conditions.  
Lemma~\ref{lem:prophold} shows that the stated conditions hold with high probability for sub-collection of good batches in $\bgood$. 
Nothing is assumed about the adversarial batches, except that they form a $\le\advfrac$ fraction of the overall batches $\allbatches$.

\textbf{Conditions:} Consider a collection of $\btotal$ batches $\allbatches$, each containing $\bsize$ samples. Among these batches, there is a collection $\bgood \subseteq \allbatches$ of good batches of size $\bgoodsize\ge (1-\advfrac)\btotal$ and a distribution $\targetdis \in \Delta_\alphsize$ such that the following deterministic conditions hold for all subsets $\alphsubsetdef$:
\begin{enumerate}
    \item\label{con} The median of the estimates $\{\bempprob: b\in \allbatches\}$ is not too far from $\subsetprobtarget$.
    \[|\med-\subsetprobtarget|\le \sqrt{\ln (6)/\bsize}.
    \]
    \item\label{con2} For all sub-collections $\Gsc\subseteq\bgood$ of good batches of size $|\Gsc| \ge (1-\advfrac/6)\bgoodsize$, 
\begin{align*}
&|\UGempprob - \subsetprobtarget | \le  \frac \advfrac 2\sqrt{\frac{\ln (6e/\advfrac) } {\bsize}},  \\
  &\Big|\frac1{|\Gsc |}  \sum_{b\in \Gsc } (\bempprob -   \subsetprobtarget)^2  - \var{\subsetprobtarget}\Big|\le {\frac{ 6\advfrac\ln (\frac{6e}\advfrac) } {\bsize}}.
\end{align*}
    \item\label{con3} The corruption for good batches $\bgood$ is small, namely
\begin{align*}
\corruption(\bgood) \le {\frac{ \advfrac \btotal\ln ({6e}/\advfrac) } {\bsize}}.
\end{align*}
\end{enumerate}

Condition~\ref{con} and~\ref{con3} above are self-explanatory. Condition~\ref{con2} illustrates that for any sub-collection of good batches that retains all but a small fraction of good batches, empirical mean and variance estimate the actual values $\subsetprobtarget$ and $\var{\subsetprobtarget}$.

\begin{lemma}\label{lem:prophold}
When samples in $\bgood$ come from $\targetdis$ and $\bgoodsize \ge O(\frac{\alphsize}{\advfrac^2\ln (1/\advfrac)})$, then conditions~\ref{con}-~\ref{con3} hold simultaneously with probability $\ge 1-O(e^{-\alphsize})$.
\end{lemma}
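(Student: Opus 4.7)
My plan is to establish each of the three conditions for a fixed $\alphsubsetdef$ with failure probability at most $e^{-c\alphsize}$ for a large absolute constant $c$, and then union bound over the $2^\alphsize$ subsets of $\alphabet$. The shared tool is the sub-Gaussianity of batch averages: for $b\in\bgood$, $\bempprob$ is the mean of $\bsize$ i.i.d.\ Bernoulli$(\subsetprobtarget)$ indicators, so $\Pr[|\bempprob - \subsetprobtarget| \ge t] \le 2 e^{-2\bsize t^2}$, and these are independent across $b\in\bgood$.

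For Condition~\ref{con}, I would choose $t = \sqrt{\ln 6/\bsize}$, giving tail probability at most $1/18$. Applying Hoeffding to the indicators $\mathbf{1}[|\bempprob - \subsetprobtarget| > t]$ over $b\in\bgood$ shows that, except with probability $e^{-\Omega(\bgoodsize)}$, at most $\bgoodsize/6$ good batches fall outside $[\subsetprobtarget - t, \subsetprobtarget + t]$. Since $\bgoodsize \ge (1-\advfrac)\btotal$ with $\advfrac \le 0.4$, at least $(5/6)(1-\advfrac)\btotal > \btotal/2$ of all batches lie inside, pinning $\med$ there.

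For Condition~\ref{con2} the crux is uniformity over $\Gsc$, since $\Gsc$ can effectively be chosen adversarially after observing the samples. The worst-case drift of the empirical mean after removing $\advfrac\bgoodsize/6$ good batches equals the sum of the top $\advfrac\bgoodsize/6$ values of $|\bempprob - \empprobsub{\bgood}|$ divided by $|\Gsc|$; note that $\empprobsub{\bgood}$ averages $\bsize\bgoodsize$ i.i.d.\ Bernoulli$(\subsetprobtarget)$ indicators, so Hoeffding already gives $|\empprobsub{\bgood} - \subsetprobtarget|$ much smaller than $\advfrac\sqrt{\ln(6e/\advfrac)/\bsize}$ with the required probability. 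To control the top order statistics I would apply a Chernoff bound on a dyadic grid of thresholds to the count $N(t) = |\{b\in\bgood : |\bempprob - \subsetprobtarget| > t\}|$, yielding $N(t) \le 4\bgoodsize e^{-2\bsize t^2}$ uniformly; inverting gives $|Y_{(j)}| \le \sqrt{\ln(4\bgoodsize/j)/(2\bsize)}$ for the $j$th largest deviation. Summing up to $j^* = \advfrac\bgoodsize/6$ then yields $O(\advfrac\bgoodsize\sqrt{\ln(6/\advfrac)/\bsize})$, which when divided by $|\Gsc|$ meets the required bound. The variance half proceeds analogously, replacing absolute values by squares and additionally invoking a Bernstein bound for $\tfrac{1}{\bgoodsize}\sum_{b\in\bgood}(\bempprob - \subsetprobtarget)^2$ around its mean $\var{\subsetprobtarget}$.

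For Condition~\ref{con3}, Condition~\ref{con} implies that $|\bempprob - \med| > 3\sqrt{\ln(6e/\advfrac)/\bsize}$ forces $|\bempprob - \subsetprobtarget| > 2\sqrt{\ln(6e/\advfrac)/\bsize}$; the same order-statistic estimate applied to squared deviations then bounds $\sum_{b\in\bgood}\indcorruption$ by $O(\bgoodsize\advfrac\ln(6e/\advfrac)/\bsize)$. The main obstacle is precisely the uniform-over-$\Gsc$ control in Condition~\ref{con2}, because pointwise tail bounds per $\Gsc$ cannot survive the union over the $\binom{\bgoodsize}{\advfrac\bgoodsize/6}$ possible choices of $\Gsc$; the argument has to proceed through order statistics instead. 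Taking $\bgoodsize \ge C\alphsize/(\advfrac^2\ln(1/\advfrac))$ with $C$ a sufficiently large absolute constant makes each per-subset failure probability at most $e^{-(\ln 2 + 1)\alphsize}$, so the union bound over the $2^\alphsize$ subsets gives total failure probability $O(e^{-\alphsize})$ as claimed.
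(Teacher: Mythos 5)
Your Condition~\ref{con} argument matches the paper's Lemma~\ref{lem:withinmean}, and you are right that the crux of Conditions~\ref{con2} and~\ref{con3} is uniformity over the removal of up to $\advfrac\bgoodsize/6$ good batches. But the route you take through uniform order-statistic control does not close. The paper writes $\Gsc=\bgood\setminus U_G$ and union-bounds over all $\binom{\bgoodsize}{\le\advfrac\bgoodsize/6}$ removal sets $U_G$, applying Hoeffding directly to the aggregate $\sum_{b\in U_G}(\bempprob-\subsetprobtarget)$, whose sub-Gaussian parameter $|U_G|/(4\bsize)$ yields an exponent of order $\epsilon\bgoodsize\ln(e/\epsilon)$ that comfortably pays the $\exp\bigl(O(\epsilon\bgoodsize\ln(e/\epsilon))\bigr)$ subset count and still leaves $e^{-\Omega(\alphsize)}$ to spare. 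You instead propose to bound each order statistic $|Y_{(j)}|\le\sqrt{\ln(4\bgoodsize/j)/(2\bsize)}$ via a uniform count estimate $N(t)\le 4\bgoodsize e^{-2\bsize t^2}$ and then sum over $j$, which discards exactly the cancellation inside $\sum_{b\in U_G}(\bempprob-\subsetprobtarget)$ that the Hoeffding-on-the-block approach preserves.

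Concretely, the claimed uniform bound $N(t)\le 4\bgoodsize e^{-2\bsize t^2}$ cannot hold for a single fixed $\alphsubset$ with probability $1-e^{-\Omega(\alphsize)}$: at the threshold $t^*$ with $4\bgoodsize e^{-2\bsize (t^*)^2}=1$, your claim forces $N(t^*)=0$, yet the expected exceedance count there is $\Theta(1)$ when $\subsetprobtarget$ is bounded away from $0$ and $1$, so the Chernoff exponent is $O(1)$, not $\Omega(\alphsize)$. The strongest uniform statement available at failure rate $e^{-\Omega(\alphsize)}$ is of the shape $N(t)\le 4\bgoodsize e^{-2\bsize t^2}+C\alphsize$, which leaves the top $\Theta(\alphsize)$ order statistics bounded only by $\max_b|Y_b|\lesssim\sqrt{\alphsize/\bsize}$; their contribution to $\sum_j|Y_{(j)}|$ is $\Theta(\alphsize^{3/2}/\sqrt\bsize)$. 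At the critical sample size $\bgoodsize=\Theta\bigl(\alphsize/(\advfrac^2\ln(1/\advfrac))\bigr)$ the target for the block sum is only $\Theta\bigl(\advfrac\bgoodsize\sqrt{\ln(1/\advfrac)/\bsize}\bigr)=\Theta\bigl(\alphsize/(\advfrac\sqrt{\ln(1/\advfrac)}\sqrt{\bsize})\bigr)$, and the top-block term dominates whenever $\alphsize\gtrsim 1/(\advfrac^2\ln(1/\advfrac))$, which is the generic regime of interest. The same loss propagates to your variance half of Condition~\ref{con2} and to Condition~\ref{con3}. To fix this you must treat the removed block's drift as a single sub-Gaussian (or sub-exponential, for the squared terms) quantity and pay the $\binom{\bgoodsize}{\le\epsilon\bgoodsize}$ entropy directly, as the paper does in Lemmas~\ref{lem:22concentration} and~\ref{lem:lemsqdev}.
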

We prove the above lemma by using the observation that for $b\in\bgood$, $\bempprob$ has a sub-gaussian distribution $\text{subG}(\subsetprobtarget, \frac{1}{4\bsize})$, and it has variance $\var{\subsetprobtarget}$.
The proof is in Appendix~\ref{sec:goodprop}.


For easy reference, in the remaining paper, we will denote the upper bound in Condition~\ref{con3} on the corruption of $\bgood$ as
\[
\kappa_G\triangleq{\frac{ \advfrac \btotal\ln ({6e}/\advfrac) } {\bsize}}.
\]
Assuming that the above stated conditions hold, the next lemma bounds the $L_1$ distance between the empirical distribution $\VUempprob$ and $\targetdis$ for any sub-collection $\Bsc$ in terms of how large its corruption is compared to $\kappa_G$.  
\begin{lemma}\label{lem:corruptionandell1}
Suppose the conditions~\ref{con}-~\ref{con3} holds. Then for any $\Bsc$ such that $|\Bsc\cap\bgood| \ge (1 -\frac\advfrac6) \bgoodsize$ and let $\genoverallcorruption =t\cdot\kappa_G$, for some $t\ge 0$,
then
\[
||\VUempprob - \targetdis ||_1 \le  (10+3\sqrt{t}) \advfrac\sqrt{{\frac{\ln (6e/\advfrac)}\bsize} }.
\]
\end{lemma}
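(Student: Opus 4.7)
The strategy is to use the identity $\|\VUempprob-\targetdis\|_1 = 2\max_{\alphsubsetdef}|\Uempprob-\subsetprobtarget|$ and bound $|\Uempprob-\subsetprobtarget|$ uniformly in $\alphsubset$. Fix such a subset and abbreviate $\sigma \triangleq \sqrt{\ln(6e/\advfrac)/\bsize}$, so the target bound is $|\Uempprob-\subsetprobtarget| \le (5 + \tfrac{3}{2}\sqrt{t})\advfrac\sigma$ and $\kappa_G = \advfrac\btotal\sigma^2$. Split $\Bsc$ into $\Gsc \triangleq \Bsc \cap \bgood$ and $\Asc \triangleq \Bsc\setminus\bgood$, and write
\[
|\Bsc|\,(\Uempprob - \subsetprobtarget) = \sum_{b\in\Gsc}(\bempprob - \subsetprobtarget) + \sum_{b\in\Asc}(\bempprob - \subsetprobtarget).
\]
The hypothesis $|\Gsc|\ge(1-\advfrac/6)\bgoodsize$ lets me apply Condition~\ref{con2} to the first sum, giving $|\sum_{b\in\Gsc}(\bempprob-\subsetprobtarget)|\le |\Gsc|\cdot\frac{\advfrac}{2}\sigma$.

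For the adversarial sum, I would first use the triangle inequality to rewrite each term as $|\bempprob - \subsetprobtarget|\le |\bempprob-\med| + |\med-\subsetprobtarget|$, and control $|\med-\subsetprobtarget|\le\sigma$ by Condition~\ref{con} (using $\ln 6\le \ln(6e/\advfrac)$). This reduces the task to bounding $\sum_{b\in\Asc}|\bempprob-\med|$. I would partition $\Asc$ into a ``close'' part $\Asc_c \triangleq \{b\in\Asc:|\bempprob-\med|\le 3\sigma\}$ and a ``far'' part $\Asc_f \triangleq \Asc\setminus\Asc_c$. The close part contributes at most $3\sigma|\Asc|$ trivially. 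For the far part, the definition of $\indcorruption$ gives $\indcorruption = (\bempprob-\med)^2$ for every $b\in\Asc_f$, so by Cauchy--Schwarz
\[
\sum_{b\in\Asc_f}|\bempprob-\med| \le \sqrt{|\Asc_f|}\,\sqrt{\sum_{b\in\Asc_f}(\bempprob-\med)^2} \le \sqrt{|\Asc|}\,\sqrt{\genoverallcorruptionsub} \le \sqrt{|\Asc|\cdot t\kappa_G}.
\]

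Assembling the pieces and dividing by $|\Bsc|$ yields
\[
|\Uempprob - \subsetprobtarget| \le \tfrac{|\Gsc|}{|\Bsc|}\cdot\tfrac{\advfrac}{2}\sigma + \tfrac{|\Asc|}{|\Bsc|}\cdot 4\sigma + \tfrac{1}{|\Bsc|}\sqrt{|\Asc|\cdot t\advfrac\btotal}\,\sigma.
\]
Now I would use $|\Asc|\le\advfrac\btotal$ and the lower bound $|\Bsc|\ge|\Gsc|\ge(1-\advfrac/6)(1-\advfrac)\btotal$, which for $\advfrac\le 0.4$ makes $\advfrac\btotal/|\Bsc|$ a bounded absolute constant. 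This converts the middle term to $O(\advfrac\sigma)$ and the last term to $O(\sqrt{t}\,\advfrac\sigma)$. Doubling (for the $L_1$ distance) and maximizing over $\alphsubset$ then yields the desired $(10+3\sqrt{t})\advfrac\sigma$ bound.

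The main obstacle is not conceptual but rather constant-tracking: I need to verify that the explicit constants produced by the calculation fit within $10+3\sqrt{t}$ under $\advfrac\le 0.4$. The conceptual heart of the argument is the Cauchy--Schwarz step, which is precisely where a bound on the \emph{second-moment-like} corruption score translates into a bound on the \emph{first-moment} deviation $\sum|\bempprob-\med|$ and thus explains the $\sqrt{t}$ scaling in the conclusion.
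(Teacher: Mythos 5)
Your decomposition and the heart of the argument match the paper's proof closely: split $\Bsc$ into $\Gsc$ and $\Asc$, bound the good part via Condition~\ref{con2}, and for the adversarial part convert a bound on the (second-moment) corruption score into a first-moment bound via Cauchy--Schwarz. The only structural difference is the order of operations: you apply the triangle inequality to $\med$ and split $\Asc$ into close/far \emph{before} Cauchy--Schwarz, whereas the paper applies Cauchy--Schwarz directly to $\sum_{b\in\Asc}|\bempprob-\subsetprobtarget|$ and then bounds $\sum_{b\in\Asc}(\bempprob-\subsetprobtarget)^2$ via an auxiliary lemma that does the same close/far split. Either ordering yields essentially the same leading term $4|\Asc|\sigma + \sqrt{|\Asc|\,\genoverallcorruptionsub}$, so the two routes are equivalent up to low-order terms.

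Where your sketch does leave a genuine gap is exactly in the final constant-tracking step you flag. You propose to finish with $|\Asc|\le\advfrac\btotal$ together with $|\Bsc|\ge|\Gsc|\ge(1-\advfrac/6)(1-\advfrac)\btotal$. At $\advfrac=0.4$ this gives $|\Bsc|\ge 0.56\btotal$, so the close-batch plus median term becomes $\frac{4|\Asc|}{|\Bsc|}\sigma\le\frac{4\advfrac}{0.56}\sigma\approx 7.1\advfrac\sigma$, and after doubling for the $L_1$ distance the bound already exceeds $14\advfrac\sigma$ at $t=0$, overshooting the claimed $10\advfrac\sigma$. The missing observation -- and the key technical step in the paper's proof (the parametrization $|\Asc|=|\badv|-D$ and the bound $\frac{|\Asc|\btotal}{|\Bsc|^2}\le\frac{\advfrac}{0.96^2}$) -- is that $|\Asc|$ appears in both numerator and denominator, since $|\Bsc|=|\Gsc|+|\Asc|$. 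Using this, $\frac{|\Asc|}{|\Bsc|}\le\frac{\advfrac\btotal}{\advfrac\btotal+(1-\advfrac/6)(1-\advfrac)\btotal}=\frac{\advfrac}{1-\advfrac(1-\advfrac)/6}\le\frac{\advfrac}{0.96}\approx 1.04\advfrac$, and similarly $\frac{\sqrt{|\Asc|\advfrac\btotal}}{|\Bsc|}\le\frac{\advfrac}{0.96}$. With these the total becomes at most roughly $(0.5+4.2+1.1\sqrt{t})\advfrac\sigma$, which doubles to within $(10+3\sqrt{t})\advfrac\sigma$. So the plan is sound, but as written the last estimate does not close; you need the coupled bound on $|\Asc|$ and $|\Bsc|$ rather than bounding them independently.
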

Observe that for any sub-collection $\Bsc$ retaining a major portion of good batches, from condition~\ref{con2}, the mean of $\Vbempprob$ of the good batches $\Bsc\cap\bgood$  approximates $\targetdis$.
Then showing that a small corruption score of $\Bsc$ w.r.t. all subsets $\alphsubset$ imply that the adversarial batches $\Bsc\cap\badv$ have limited effect on $\VUempprob$ proves the above lemma. A complete proof is in Appendix~\ref{App:proofs}. 

We next exhibit a Batch Deletion procedure in~\ref{alg2} that lowers the corruption score of a sub-collection $\Bsc$ w.r.t. a given subset $\alphsubset$ by deleting a few batches from the sub-Collection. This will be a subroutine of our main algorithm.
Lemma~\ref{lem:alg2char} characterizes its performance.
\begin{algorithm}[H]
  \caption{Batch Deletion}\label{alg2}
  \begin{algorithmic}[1]
    \STATE {\bfseries Input:} Sub-Collection of Batches $\Bsc$, subset $\alphsubsetdef$, $\medV$=$\med$, and $\advfrac$.
  \STATE {\bfseries Output:} A collection $DEL\subseteq \Bsc$ of batches to delete.
    \STATE ${DEL} = \{\}$;
    \WHILE{  $\genoverallcorruptionsub \, \geq\,  20\kappa_G$} 
        \STATE {Samples batch $b\in \Bsc$ such that probability of picking a batch $b\in \Bsc$ is $ \frac{\indcorruption}{\genoverallcorruptionsub }$};
        \STATE $DEL \gets DEL \cup {b}$\,;
        \STATE $\Bsc \gets \{\Bsc \setminus {b}\}$;
    \ENDWHILE
    \STATE \textbf{return }$(DEL)$;
  \end{algorithmic}
\end{algorithm}

\begin{lemma}\label{lem:alg2char}
For a given $\Bsc$ and subset $\alphsubset$ procedure~\ref{alg2} returns a sub-collection $DEL\subset \Bsc$, such that
\begin{enumerate}
    \item For subset $\alphsubset$ the corruption score $\corruption(\Bsc\setminus DEL,\alphsubset)$ of the new sub-collection is $<20\kappa_G$.
    \item Each batch $b\in \Bsc$ that gets included in $DEL$ is an adversarial batch with probability $\ge 0.95$.
    \item The subroutine deletes at-least $\genoverallcorruptionsub - 20\kappa_G$ batches. 
\end{enumerate}
\end{lemma}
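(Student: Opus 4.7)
The plan is to verify the three claims in turn, relying on two simple structural facts about the procedure: (i) the median $\medV = \med$ is computed once, over all batches in $\allbatches$, and passed in as input, so the per-batch corruption scores $\indcorruption$ do not change during execution; (ii) the overall corruption score is additive in batches, so deleting a batch $b$ reduces $\corruption(\Bsc,\alphsubset)$ by exactly $\indcorruption$. Both observations are immediate from the definitions.

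Claim (1) is then immediate from the while-loop exit condition: the loop terminates precisely when $\corruption(\Bsc\setminus DEL,\alphsubset) < 20\kappa_G$.

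For Claim (2), I would argue that at every iteration, conditional on the history of previous draws, the probability of picking a good batch is at most $0.05$. By observation (i), $\indcorruption$ in the current $\Bsc$ equals its value in the original $\bgood$. Hence
$$\sum_{b\in \Bsc\cap\bgood}\indcorruption \;\le\; \sum_{b\in\bgood}\indcorruption \;=\; \corruption(\bgood,\alphsubset)\;\le\;\corruption(\bgood)\;\le\;\kappa_G,$$
where the last inequality is Condition~\ref{con3}. Since the loop has not yet terminated, $\genoverallcorruptionsub \ge 20\kappa_G$, so the probability that the sampled batch lies in $\bgood$ is at most $\kappa_G/(20\kappa_G) = 1/20$. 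The complementary bound $\ge 0.95$ therefore holds for each individual deletion, independently of which batches have already been deleted.

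For Claim (3), I use that $\bempprob,\med\in[0,1]$ yields $\indcorruption=(\bempprob-\med)^2\le 1$ for every batch. By observation (ii), the total corruption removed across the procedure equals $\sum_{b\in DEL}\indcorruption$, which must satisfy $\sum_{b\in DEL}\indcorruption \ge \genoverallcorruptionsub - 20\kappa_G$ (initial value minus the final value, which is $<20\kappa_G$). Since each term in the sum is at most $1$, the number of deletions $|DEL|$ is at least $\genoverallcorruptionsub - 20\kappa_G$, giving the stated bound.

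The argument is short; the main conceptual content lies in Claim (2), and the key (non-obvious) point is that because $\medV$ is treated as fixed input and $\indcorruption$ depends only on $b$ and $\medV$, the per-batch scores never get re-evaluated as batches are removed. This reduces the analysis to a one-shot ratio bound: good-batch corruption mass is capped by $\kappa_G$ (Condition~\ref{con3}), while the denominator is kept above $20\kappa_G$ by the loop invariant, giving the $0.95$ guarantee conditionally at every step. No concentration or union-bound argument is needed here, since those were absorbed into Lemma~\ref{lem:prophold}.
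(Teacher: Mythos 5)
Your proposal is correct and follows essentially the same approach as the paper: claim (1) from the loop exit condition, claim (2) from the ratio bound $\sum_{b\in\Bsc\cap\bgood}\indcorruption/\genoverallcorruptionsub \le \kappa_G/(20\kappa_G)$ using Condition~\ref{con3} and the loop invariant, and claim (3) from $\indcorruption\le 1$. Your explicit observation that $\indcorruption$ stays fixed throughout the procedure because $\medV$ is passed as input is a helpful clarification the paper leaves implicit, but the underlying argument is identical.
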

\begin{proof}
Step 4 in the algorithm ensures the first property. Next, to prove property 2, we bound 
the probability of deleting a good batch as
\[
\sum_{b\in \Bsc\cap\bgood}\frac{\indcorruption}{\genoverallcorruptionsub } \le \frac{\sum_{b\in \bgood}\indcorruption}{\genoverallcorruptionsub }
\le \frac{\kappa_G }{20\kappa_G}, 
\]
here the last step follows from condition~\ref{con3} and while loop conditional in step 4.
Property 3 follows from the observation that the total corruption score reduced is $\ge (\genoverallcorruptionsub - 20\kappa_G)$ and corruption score of one batch is bounded as $\indcorruption\le 1$.
\end{proof}

We will use procedure~\ref{alg2} to successively update $\allbatches$ to decrease the corruption score for different subsets $\alphsubsetdef$. The next lemma show that even after successive updates the resultant sub-collection retains most of the good batches.
\begin{lemma}\label{lem:bounddel}
Let $\Bsc$ be the sub-collection after applying
any number of successive deletion updates 
suggested by the Algorithm~\ref{alg2}
on $\allbatches$, for any sequence of input subsets $\alphsubset_1,\alphsubset_2,....\subseteq \alphabet$, then $|\Bsc\cap\bgood| \ge (1 -\advfrac/6) \bgoodsize$, with probability $\ge 1-O(e^{-\alphsize})$. 
\end{lemma}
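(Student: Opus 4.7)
The plan is to exploit Lemma~\ref{lem:alg2char} property~2: every batch deletion, regardless of which subset or which call to Algorithm~\ref{alg2} produces it, removes a good batch with conditional probability at most $0.05$. Concatenating the deletions from all successive calls into a single sequence indexed by $i=1,2,\dots$, I would let $G_i$ be the indicator that the $i$-th deletion removes a good batch. Lemma~\ref{lem:alg2char}(2) gives $\mathbb{E}[G_i\mid \mathcal{F}_{i-1}]\le 0.05$, so the partial sums $\sum_{i\le t}(G_i-0.05)$ form a super-martingale with bounded increments, and equivalently $(G_i)$ is stochastically dominated by an i.i.d.\ $\mathrm{Bernoulli}(0.05)$ sequence $(Z_i)$, which is more convenient for Chernoff bounds.

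The main obstacle I expect is that the total number of deletions $T$ across all calls is random and could a priori be as large as $\btotal$, which is too loose for the required concentration. To control it, I would decompose $T=(\text{adversarial deletions})+(\text{good deletions})\le \advfrac\btotal+\sum_{i=1}^{T}G_i$. Hence, if $T^\star$ is the first step at which the cumulative good-deletion count exceeds $L\triangleq\lceil\advfrac\bgoodsize/6\rceil$, then at step $T^\star-1$ that count is at most $L$, giving the deterministic bound $T^\star\le \advfrac\btotal+L+1\le 7\advfrac\btotal/6+1$ (since $\bgoodsize\le\btotal$). Whether the bad event $|\Bsc\cap\bgood|<(1-\advfrac/6)\bgoodsize$ occurs is therefore fully determined by the first $N\triangleq\lceil 7\advfrac\btotal/6\rceil+1$ trials, reducing the problem to bounding $\Pr\bigl[\sum_{i=1}^{N}Z_i>L\bigr]$.

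For this last step, note that $\bgoodsize\ge(1-\advfrac)\btotal\ge 0.6\btotal$ whenever $\advfrac\le 0.4$, so $L\ge \advfrac\btotal/10$, while the mean $0.05N$ is at most $7\advfrac\btotal/120$. Thus the threshold exceeds the mean by a constant multiplicative factor bounded away from $1$, and a standard multiplicative Chernoff bound yields a tail of the form $\exp(-c\advfrac\btotal)$ for an absolute constant $c>0$. Substituting the standing hypothesis $\btotal\ge C\alphsize/(\advfrac^2\ln(1/\advfrac))$ gives $\advfrac\btotal\ge C\alphsize/(\advfrac\ln(1/\advfrac))=\Omega(\alphsize)$ for $\advfrac\le 0.4$, so the failure probability becomes $O(e^{-\alphsize})$, as claimed.

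A subtlety worth verifying is that the conditional bound $\mathbb{E}[G_i\mid\mathcal{F}_{i-1}]\le 0.05$ remains valid when the input subsets $\alphsubset_1,\alphsubset_2,\dots$ are chosen adaptively, even adversarially, based on previous deletions. This holds because the proof of Lemma~\ref{lem:alg2char}(2) invokes only Condition~\ref{con3} and the while-loop invariant $\genoverallcorruptionsub\ge 20\kappa_G$, each of which is a property of the current state and of the fixed good set $\bgood$ alone; the argument therefore applies to any rule for selecting the next subset.
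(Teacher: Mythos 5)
Your proposal is correct and follows essentially the same route as the paper: both reduce the problem to a Chernoff bound on a $\Theta(\advfrac\btotal)$-length sequence of deletion indicators that are each conditionally good with probability at most $0.05$ (Lemma~\ref{lem:alg2char}(2)), bound the length of that sequence by (good-deletion threshold)$\,+\,\badvsize$, and conclude from $\advfrac\btotal=\Omega(\alphsize)$. Your stopping-time decomposition $T^\star\le L+\badvsize+1$ is a slightly cleaner formalization of the paper's verbal argument (``otherwise all adversarial batches would have been exhausted''), and your closing remark that the $0.05$ bound is a statewise consequence of Condition~\ref{con3} and the while-loop invariant --- hence robust to adaptive choice of the subsets $S_i$ --- makes explicit a point the paper leaves implicit.
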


Therefore, one can make successive updates to the collection of all batches $\allbatches$ by deleting the batches suggested by procedure~\ref{alg2} for all subsets in $\alphsubsetdef$ one by one. This will result in a sub-collection $\Bsc\subseteq \allbatches$, which still has most of the good batches and corruption score $\genoverallcorruptionsub $ bounded w.r.t. each subset $\alphsubset$. However, this will take time exponential in $\alphsize$ as there are $2^\alphsize$ subsets, and therefore, we want a computationally efficient method to find a subsets $\alphsubset$ with high corruption score and use procedure~\ref{alg2} for only those subsets.
Next, we derive a novel method to achieve this objective.

We start with the following observation. A high corruption score of sub-collection $\Bsc$ with respect to an affected subset $\alphsubset$ implies a higher empirical variance of $\bempprob$ for such $\alphsubset$ than the expected value of the variance of $\bempprob$. While an affected subset $\alphsubset$ the empirical variance $\empvarsub{\allbatches}$ is higher than expected, it is not necessarily higher than the empirical variance observed for all non-affected subset.
This is because $\var\subsetprobtarget$, the expected value of the variance of $\bempprob$, for some subsets $\alphsubset$ may be larger compared to the other. Hence, simply finding the subset $\alphsubset$ with the largest variance doesn't work.

We use the following key insight to address this. Recall that the mean of empirical probabilities $\bempprob$ for good batches $b\in\bgood$ converges, or equivalently $\empprobsub \bgood \rightarrow \subsetprobtarget$. This implies that $\var{\empprobsub\bgood} \rightarrow \var\subsetprobtarget$. Also, since the empirical variance $\empvarsub{\bgood}$ converges to $\var\subsetprobtarget$, we get $\empvarsub{\bgood}- \var{\empprobsub\bgood} \rightarrow 0$. Therefore, without corruption by the adversarial batches the difference between two estimators of the variance would be small for all subsets $\alphsubsetdef$, and its large value, we show in Lemma~\ref{lem:trans}, can reliably detect any significant adversarial corruption. This happens because empirical variance of $\bempprob$ depends on the second moment whereas the other estimator $\var{\Uempprob}$ of variance depends on the mean of $\bempprob$, hence the corruption affects the second estimator less severely. The next Lemma shows that the difference between the two variance estimators for subset $\alphsubset$ can indicate the corruption score w.r.t. subset $\alphsubset$

\begin{lemma}\label{lem:trans}
Suppose the conditions~\ref{con}-~\ref{con3} holds. Then for any $\Bsc\subseteq \allbatches$ such that $|\Bsc\cap\bgood| \ge (1 -\frac\advfrac6) \bgoodsize$ and let $\genoverallcorruptionsub =t\cdot\kappa_G$ for some $t\ge 0$, then following holds.
\[
\empvarsub {\Bsc}- \var{\Uempprob} \le \Big(t+4\sqrt{t}+ 28\Big) \kappa_G,
\]
\[
\empvarsub {\Bsc}- \var{\Uempprob}\ge \Big(0.5 t-8\sqrt{t}-25\Big)\kappa_G.
\]
\end{lemma}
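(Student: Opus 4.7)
The strategy is to reduce both $\empvarsub{\Bsc}$ and $\var{\Uempprob}$ to the single sum $\sum_{b\in\Bsc}(\bempprob - \medV)^2$, which the definition of the corruption score ties directly to $\genoverallcorruptionsub = t\kappa_G$. Starting from the polarization identity
\[
\sum_{b\in\Bsc}(\bempprob - \Uempprob)^2 = \sum_{b\in\Bsc}(\bempprob - \medV)^2 - |\Bsc|(\Uempprob - \medV)^2,
\]
one obtains
\[
|\Bsc|\bigl(\empvarsub{\Bsc} - \var{\Uempprob}\bigr) = \sum_{b\in\Bsc}(\bempprob - \medV)^2 - |\Bsc|(\Uempprob - \medV)^2 - |\Bsc|\var{\Uempprob}.
\]
By definition, the first sum equals $\genoverallcorruptionsub + R$, where $R$ is a ``low-deviation residue'' collecting the summands with $|\bempprob - \medV|\le 3\sqrt{\ln(6e/\advfrac)/\bsize}$, each of magnitude at most $9\ln(6e/\advfrac)/\bsize$.

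I would then split $R$ by $G := \Bsc\cap\bgood$ and $A := \Bsc\cap\badv$, noting $|A|$ is at most an $O(\advfrac)$ fraction of $\btotal$. For the good batches I would expand
$(\bempprob - \medV)^2 = (\bempprob - \subsetprobtarget)^2 + 2(\subsetprobtarget - \medV)(\bempprob - \subsetprobtarget) + (\subsetprobtarget - \medV)^2$,
then apply Condition~\ref{con2} to the quadratic and cross terms and Condition~\ref{con} to $(\subsetprobtarget - \medV)^2$, obtaining $\sum_{b\in G}(\bempprob - \medV)^2 = |G|\var{\subsetprobtarget} + O(\kappa_G)$. For the adversarial batches, the low-deviation contribution is at most $|A|\cdot 9\ln(6e/\advfrac)/\bsize = O(\kappa_G)$, while Condition~\ref{con3} yields $\psi(A,S) = \genoverallcorruptionsub - \psi(G,S) \in [t\kappa_G - \kappa_G,\, t\kappa_G]$. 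Replacing $|G|\var{\subsetprobtarget}$ by $|\Bsc|\var{\Uempprob}$ using the Lipschitz estimate $|\var{r} - \var{s}|\le |r-s|/\bsize$ and the uniform bound $\var{r}\le 1/(4\bsize)$ from~\eqref{eq:fineq} (together with $|A|\le O(\advfrac\btotal)$) costs only $O(\kappa_G)$ of extra slack, so the identity collapses to
\[
|\Bsc|\bigl(\empvarsub{\Bsc} - \var{\Uempprob}\bigr) = t\kappa_G \pm O(\kappa_G) - |\Bsc|(\Uempprob - \medV)^2.
\]

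The upper bound now follows immediately by dropping the non-negative quadratic term. For the lower bound I would upper-bound that term via $(\Uempprob - \medV)^2 \le 2(\Uempprob - \subsetprobtarget)^2 + 2(\subsetprobtarget - \medV)^2$, handling the second piece by Condition~\ref{con} and the first by Cauchy--Schwarz,
\[
(\Uempprob - \subsetprobtarget)^2 \le \frac{1}{|\Bsc|}\sum_{b\in\Bsc}(\bempprob - \subsetprobtarget)^2,
\]
where the good portion is $|G|\var{\subsetprobtarget} + O(\kappa_G) = O(\kappa_G)$ by Condition~\ref{con2}, and the adversarial portion is $O(t\kappa_G + \kappa_G)$ after combining the corruption-score decomposition with $(\bempprob - \subsetprobtarget)^2 \le 2(\bempprob - \medV)^2 + 2(\medV - \subsetprobtarget)^2$. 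The resulting $\sqrt{t}\kappa_G$ correction, produced by the square-root structure of Cauchy--Schwarz applied to $\sum_{b\in A}|\bempprob - \subsetprobtarget|\le \sqrt{|A|}\sqrt{\sum_{b\in A}(\bempprob-\subsetprobtarget)^2}$, accounts for the $4\sqrt{t}$ and $8\sqrt{t}$ terms in the two inequalities.

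The main obstacle will be tight constant tracking rather than any clever inequality. In particular, producing the factor $\tfrac12$ in front of $t$ in the lower bound requires carefully choosing where to apply the AM--GM split $(u+v)^2\le 2u^2 + 2v^2$, and the constants $28$ and $25$ come from summing the many $O(\kappa_G)$ slack terms (the inherent good-batch noise from Condition~\ref{con2}, the median bias from Condition~\ref{con}, the low-deviation adversarial residue, the $\var{\cdot}$-Lipschitz corrections, and the $|A|/(4\bsize)$ contribution). All other steps follow mechanically from the polarization identity, the good/adversarial partition, and the three stated conditions.
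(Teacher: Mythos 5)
Your strategy is superficially close to the paper's (decompose the variance via a polarization identity, split batches into good and adversarial, use Cauchy--Schwarz to get the $\sqrt t$ terms, and track $O(\kappa_G)$ slack), but centering the polarization identity at $\medV$ rather than at $\subsetprobtarget$ creates a problem that your sketch does not resolve. The issue is the median bias. Condition~\ref{con} only gives $|\medV - \subsetprobtarget|\le\sqrt{\ln 6/\bsize}$, so $(\medV-\subsetprobtarget)^2$ can be as large as $\ln 6/\bsize$. Your claim that $\sum_{b\in G}(\bempprob-\medV)^2 = |G|\var{\subsetprobtarget}+O(\kappa_G)$ requires $|G|(\subsetprobtarget-\medV)^2 = O(\kappa_G)$; but $|G|(\subsetprobtarget-\medV)^2$ can be of order $\btotal\ln 6/\bsize$, while $\kappa_G = \advfrac\btotal\ln(6e/\advfrac)/\bsize$, and the ratio blows up as $\advfrac\to 0$. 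The same failure appears in your lower-bound step, where you bound $|\Bsc|(\Uempprob-\medV)^2 \le 2|\Bsc|(\Uempprob-\subsetprobtarget)^2 + 2|\Bsc|(\subsetprobtarget-\medV)^2$; the second piece is again $\Theta(\btotal\ln 6/\bsize)$, not $O(\kappa_G)$. So as written, the ``collapsed'' identity and both resulting bounds carry an extra additive term that is unbounded relative to $\kappa_G$.

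What actually happens in the algebra is that these large median-bias terms \emph{cancel}: expanding $|\Bsc|(\Uempprob-\medV)^2$ around $\subsetprobtarget$ produces $+|\Bsc|(\subsetprobtarget-\medV)^2$, which nearly annihilates the $|G|(\subsetprobtarget-\medV)^2$ you generated in the good-batch sum, leaving only $|A|(\subsetprobtarget-\medV)^2 \le \advfrac\btotal\ln 6/\bsize = O(\kappa_G)$. Your sketch misses this cancellation because it approximates the two sides independently before combining them, and it replaces the exact $-|\Bsc|(\Uempprob-\medV)^2$ with a lossy AM--GM bound that destroys the structure needed to cancel. The paper avoids the issue entirely: it centers the identity at $\subsetprobtarget$ (its equation~\eqref{eq:totalvarianceeq}), so $\medV$ never multiplies $|G|$ or $|\Bsc|$; the median-to-target shift only ever appears applied to adversarial batches or to batches with positive corruption score, which number at most $O(\advfrac\btotal)$ (this is precisely what the auxiliary Lemma~\ref{lem:sqdevadv} packages). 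If you keep the polarization identity at $\medV$ you must carry $|G|(\subsetprobtarget-\medV)^2$ and $|\Bsc|(\subsetprobtarget-\medV)^2$ exactly through the computation and perform the cancellation explicitly; alternatively, switch to centering at $\subsetprobtarget$ as the paper does, which makes the cancellation automatic.
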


The next Lemma shows that a subset for which $\empvarsub {\Bsc}- \var{\Uempprob}$ is large, can be found using a polynomial-time algorithm. In subsection~\ref{sec:det} we derive the algorithm. We refer to this algorithm as~\emph{$Detection-Algorithm$}. The next lemma characterizes the performance of this algorithm. In subsection~\ref{sec:det}, we show that the algorithm achieves the performance guarantees of the next Lemma.
\begin{lemma}~\label{lem:polalg}
$Detection-Algorithm$ has run time polynomial in number of batches in its input sub-collection $\Bsc$ and alphabet size $\alphsize$, and returns $\alphsubset^*_{{\Bsc}}$ such that
\begin{align*}
|\overline{{\text{V}}}_{{\Bsc}}(\alphsubset^*_{{\Bsc}})- \var{\VUempprob(\alphsubset^*_{{\Bsc}})}| &
\\
\ge 0.56\max_{\alphsubsetdef}&| \empvarsub {\Bsc}- \var{\Uempprob}|.   
\end{align*}
\end{lemma}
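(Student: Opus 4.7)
The plan is to reduce the combinatorial optimization over subsets of $\alphabet$ to the approximate maximization of an explicit quadratic form over $\{0,1\}^\alphsize$, which can then be attacked by a convex relaxation and rounding.

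First, I would carry out the quadratic reformulation. Let $u_b\in\Delta_\alphsize$ denote the empirical distribution of batch $b$ and let $\bar u=\frac{1}{|\Bsc|}\sum_{b\in\Bsc}u_b$, so that the indicator $x_S\in\{0,1\}^\alphsize$ of $\alphsubset$ satisfies $\bempprob=\langle u_b,x_S\rangle$ and $\Uempprob=\langle\bar u,x_S\rangle$. Exploiting the identity $x_S^\top\mathrm{diag}(\bar u)\,x_S=\langle\bar u,x_S\rangle$, valid for $\{0,1\}$ vectors, a direct computation yields
\[
\empvarsub{\Bsc}-\var{\Uempprob}\;=\;x_S^\top B\,x_S,\qquad B\;:=\;\Sigma-\tfrac{1}{\bsize}\mathrm{diag}(\bar u)+\tfrac{1}{\bsize}\bar u\bar u^\top,
\]
with $\Sigma=\frac{1}{|\Bsc|}\sum_{b\in\Bsc}(u_b-\bar u)(u_b-\bar u)^\top$. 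The symmetric matrix $B\in\mathbb R^{\alphsize\times\alphsize}$ is computable in time polynomial in $|\Bsc|$ and $\alphsize$, and the task becomes approximating $\max_{x\in\{0,1\}^\alphsize}|x^\top Bx|$ to within a constant factor.

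Since $B$ may be indefinite, I would run two parallel subroutines, one on $B$ and one on $-B$, returning whichever produced subset attains the larger value of $|x^\top Bx|$. For each direction, the plan is to solve the standard SDP relaxation $\max\,\mathrm{tr}(BX)$ subject to $X\succeq 0$ and $0\le X_{ii}\le 1$ (or a simpler spectral relaxation that replaces integrality by $\|x\|_2^2\le\alphsize$), and then round to a binary vector by drawing a Gaussian vector from the SDP solution (or by using the leading eigenvector of $\pm B$) and thresholding, sweeping over a polynomial grid of thresholds $\tau$ and keeping the subset $\{i:g_i>\tau\}$ with the best value of $|x^\top Bx|$.

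The main obstacle will be the approximation analysis: establishing that this rounding preserves at least a factor of $0.56$ of the optimum. This should combine (i) a Nesterov-style $2/\pi$-type bound for Gaussian rounding of the positive semidefinite component of $\pm B$, (ii) careful control of the indefinite ``diagonal correction'' $-\frac{1}{\bsize}\mathrm{diag}(\bar u)$, whose contribution is not captured directly by a PSD bound, and (iii) use of the threshold sweep to absorb the conversion loss from continuous Gaussians to $\{0,1\}$ indicators. Polynomial running time follows immediately from the SDP solver together with the linear-time threshold sweep.
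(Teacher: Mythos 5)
Your quadratic reformulation is exactly the matrix the paper uses: writing $\empvarsub{\Bsc}-\var{\Uempprob}=x_S^\top Bx_S$ with $B=\Sigma-\tfrac{1}{\bsize}\mathrm{diag}(\bar u)+\tfrac{1}{\bsize}\bar u\bar u^\top$ matches the paper's $D_{\Bsc}=C^{EV}_{\Bsc}-C^{EM}_{\Bsc}$. From there, however, your plan diverges in a way that leaves a genuine gap, and the gap is precisely the obstacle you flag in item (ii) of your approximation-analysis list.

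The missing idea is that $B\boldsymbol{1}=\boldsymbol{0}$ (every row of both $C^{EV}_{\Bsc}$ and $C^{EM}_{\Bsc}$ sums to zero, because both are covariance-type matrices of probability vectors whose coordinates sum to one). This is what the paper establishes first, and it is the load-bearing observation. Writing $x=\tfrac{1}{2}(y+\boldsymbol{1})$ for $y\in\{-1,1\}^\alphsize$, one has
\begin{equation*}
x^\top Bx=\tfrac14\bigl(y^\top By+2\boldsymbol{1}^\top By+\boldsymbol{1}^\top B\boldsymbol{1}\bigr),
\end{equation*}
and the two extra terms vanish identically when $B\boldsymbol{1}=\boldsymbol{0}$. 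So $\max_{x\in\{0,1\}^\alphsize}|x^\top Bx|=\tfrac14\max_{y\in\{-1,1\}^\alphsize}|y^\top By|$ \emph{exactly}, with no conversion loss, and the right-hand side is precisely the form for which Alon and Naor's Grothendieck-based SDP rounding gives a $0.56$-factor polynomial-time approximation. That is where the specific constant $0.56$ comes from. Your proposal, which does not note $B\boldsymbol{1}=\boldsymbol{0}$, instead tries to attack the $\{0,1\}$ problem directly via a PSD/Nesterov-type $2/\pi$ bound plus Gaussian thresholding. This would not work as stated: the Nesterov $2/\pi$ bound requires a PSD objective, but $B$ is genuinely indefinite because of the $-\tfrac{1}{\bsize}\mathrm{diag}(\bar u)$ correction (this is the whole point of the detection statistic — it must be able to take small or even negative values on clean data). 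Running on $\pm B$ separately fixes only the global sign, not the indefiniteness of the quadratic form itself, and for an indefinite objective on $\{0,1\}^\alphsize$ the linear term $2\boldsymbol{1}^\top By$ created by the affine change of variables destroys any multiplicative guarantee for $|x^\top Bx|$ unless it vanishes. A threshold sweep cannot recover this: an additive rounding loss on a quantity that may itself be near zero gives no constant-factor approximation. Once you add the observation $B\boldsymbol{1}=\boldsymbol{0}$ and route through the $\{-1,1\}$ maximization of Alon and Naor, all three of your anticipated obstacles dissolve.
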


This leads us to the Robust distribution Learning Algorithm~\ref{alg1}. Theorem~\ref{th:algper} characterizes its performance. 
\begin{algorithm}
\caption{Robust Distribution Estimator}\label{alg1}
  \begin{algorithmic}[1]
    \STATE {\bfseries Input:} All batches $b\in \allbatches $, batch size $\bsize$, alphabet size $\alphsize$, and $\advfrac$.
  \STATE {\bfseries Output:} Estimate $p^*$ of the distribution $\targetdis$.
    \STATE $i\gets 1$ and ${\Bsc_i} \gets \allbatches $.
    \WHILE{ True }
        \STATE $\alphsubset^*_{{\Bsc_i}}=Detection-Algorithm(\Bsc_i)$
        \IF{$|\Delta_{\Bsc_i}(\alphsubset^*_{{\Bsc_i}})| \le 75\kappa_G$ } 
        \STATE Break;
        \ENDIF
        \STATE $\medV \gets \medV(\bar{\mu}(\alphsubset^*_{{\Bsc_i}}))$.\\ 
        \STATE $DEL\gets $\text{Batch-Deletion}{($\Bsc_i,\alphsubset^*_{{\Bsc_i}},\medV$)}.
    \ENDWHILE\\
    \STATE \textbf{return }$(\targetdis^* \gets \empprob {\Bsc_i})$.
  \end{algorithmic}
\end{algorithm}

\begin{theorem}\label{th:algper}
Suppose the conditions~\ref{con}-~\ref{con3} holds. Then Algorithm~\ref{alg1} runs in polynomial time and return a sub-collection $\Bsc_{f} \subseteq\allbatches$ such that $|\Bsc_f\cap\bgood| \ge (1 -\frac\advfrac6) \bgoodsize$ and for $\targetdis^* = {\bar{\targetdis}_{\Bsc_f}}$,
\[
||\targetdis^* - \targetdis ||_1 \le  100 \advfrac \sqrt{\frac{\ln (6e/\advfrac) } {\bsize}}.
\]
\end{theorem}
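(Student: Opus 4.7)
My plan is to chain the preceding lemmas in three steps: runtime and termination, a uniform corruption bound at the terminal sub-collection $\Bsc_f$, and conversion of that bound into an $L_1$ bound via Lemma~\ref{lem:corruptionandell1}.

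\textbf{Runtime and termination.} First I would establish polynomial runtime. Whenever the outer loop does not exit at iteration $i$, its guard gives $|\Delta_{\Bsc_i}(\alphsubset^*_{\Bsc_i})|>75\kappa_G$; the upper inequality of Lemma~\ref{lem:trans} then forces $\corruption(\Bsc_i,\alphsubset^*_{\Bsc_i})>20\kappa_G$ (the lower inequality of Lemma~\ref{lem:trans} shows that the signed gap cannot be below $-75\kappa_G$, so only the positive case occurs), so \emph{Batch-Deletion} enters its while-loop and removes at least one batch by Lemma~\ref{lem:alg2char}(3). By Lemma~\ref{lem:bounddel} at most $(\advfrac/6)\bgoodsize$ good batches are ever deleted, and trivially at most $\advfrac\btotal$ adversarial batches exist, so the outer loop terminates in $O(\advfrac\btotal)$ iterations. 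Each iteration performs one polynomial-time call to \emph{Detection-Algorithm} (Lemma~\ref{lem:polalg}) and one polynomial-time call to \emph{Batch-Deletion}, giving overall polynomial runtime. The size bound $|\Bsc_f\cap\bgood|\ge(1-\advfrac/6)\bgoodsize$ claimed in the theorem is an immediate consequence of Lemma~\ref{lem:bounddel} and also supplies the hypothesis needed by Lemmas~\ref{lem:corruptionandell1} and~\ref{lem:trans} at $\Bsc_f$.

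\textbf{Uniform corruption bound at termination.} At termination the exit guard yields $|\Delta_{\Bsc_f}(\alphsubset^*)|\le 75\kappa_G$, and Lemma~\ref{lem:polalg}'s $0.56$-approximation gives
\[
\max_{\alphsubsetdef}\big|\empvarsub{\Bsc_f}-\var{\empprob{\Bsc_f}(\alphsubset)}\big| \le 75\kappa_G/0.56 < 134\kappa_G.
\]
For any $\alphsubsetdef$ with $\corruption(\Bsc_f,\alphsubset)=t\kappa_G$, the lower inequality of Lemma~\ref{lem:trans} then gives $(0.5t-8\sqrt{t}-25)\kappa_G\le 134\kappa_G$. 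Solving this quadratic in $\sqrt{t}$ yields $\sqrt{t}<28$, so $\corruption(\Bsc_f)=\max_{\alphsubsetdef}\corruption(\Bsc_f,\alphsubset)\le 784\kappa_G$.

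\textbf{$L_1$ bound and expected obstacle.} Plugging $t\le 784$ into Lemma~\ref{lem:corruptionandell1} completes the proof:
\[
\|\empprob{\Bsc_f}-\targetdis\|_1 \le (10+3\sqrt{784})\,\advfrac\sqrt{\ln(6e/\advfrac)/\bsize} < 100\,\advfrac\sqrt{\ln(6e/\advfrac)/\bsize}.
\]
The conceptual content is shouldered entirely by the earlier lemmas; the hard part here is purely constant tracking. The exit threshold $75$, the approximation ratio $0.56$ of \emph{Detection-Algorithm}, the coefficients $(0.5,8,25)$ in Lemma~\ref{lem:trans}, and the coefficients $(10,3)$ in Lemma~\ref{lem:corruptionandell1} must line up so the final constant stays below $100$; with the numerics above we land near $94$, leaving little slack. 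Any weakening of the detection-approximation ratio or of Lemma~\ref{lem:trans}'s lower bound would propagate and force retuning of both the $75$ exit threshold in Algorithm~\ref{alg1} and the $100$ factor in the theorem statement.
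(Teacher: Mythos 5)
Your proposal is correct and follows essentially the same route as the paper: bound iterations via Lemma~\ref{lem:bounddel} and Lemma~\ref{lem:alg2char}(3), translate the termination guard through Lemma~\ref{lem:polalg} and the lower inequality of Lemma~\ref{lem:trans} into a uniform corruption bound at $\Bsc_f$, and finish with Lemma~\ref{lem:corruptionandell1}. The only differences are cosmetic constant tracking: the paper packages the numerics in an intermediate corollary (difference $\le 150\kappa_G \Rightarrow t\le 900 \Rightarrow 10+3\cdot 30=100$), whereas you work inline from the sharper $75/0.56<134\kappa_G$ and get $t<784$ and a terminal constant near $94$; both clear the threshold $100$, and your termination count by total deletable batches is a valid simplification of the paper's iteration bound.
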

\textbf{Outline of the Proof of Theorem~\ref{th:algper}:} 
In each round of the algorithm, 
Subroutine $Detection-Algorithm$ 
finds subsets for which the difference between the two variance estimates is large. Lemma~\ref{lem:trans} implies that the corruption w.r.t. this subset is large. The deletion subroutine updates the sub-collection of batches by removing some batches from it and reduces the corruption w.r.t. the detected subset $\alphsubset$.

The algorithm terminates when for some sub-collection $\Bsc_f$ subroutine $Detection-Algorithm$ returns a subset $\alphsubset$ small difference between the two variance estimators. Then Lemma~\ref{lem:polalg}  implies that the difference is small for all subsets. Lemma~\ref{lem:trans} further implies that if the difference between the two variance estimators is small then the corruption is bounded w.r.t. all subsets for sub-collection $\Bsc_f$. Finally, Lemma~\ref{lem:corruptionandell1} bounds the $L_1$ distance between $\empprob {\Bsc_f}$ and $\targetdis$.$\hfill\square$

Combining Lemma~\ref{lem:prophold} and Theorem~\ref{th:algper} yields Theorem~\ref{th:mainresult}.

\subsection{Extension to $\targetdistance> 0$}\label{subsec:etanonzero}
Recall that when $\eta>0$, for each good batch $b\in\bgood$, the distribution $\batchdis$ of samples in batch $b$ is close to the common target distribution $\targetdis$, such that $||\batchdis-\targetdis||\le \eta$, instead of necessarily being the same. For simplicity, we have given the algorithm and the proof for only $\eta = 0$. The algorithm and the proof naturally extend to this more general case; here we get an extra additive dependence on $\eta$ for the bounds in the lemmas and the theorems, and for the parameters of the algorithm. And with this slight modification in the parameters algorithm estimates $\targetdis$ to a distance $O(\eta+\advfrac\sqrt{\ln(1/\advfrac)/{\bsize}})$, and has the same sample and time complexity.

\subsection{Efficient Detection Algorithm}\label{sec:det}
In this subsection, we derive the procedure $Detection-Algorithm$, that runs in the polynomial time and achieves the performance in Lemma~\ref{lem:polalg}.

Given a collection ${\Bsc}$ of batches, we construct two covariance matrices $C^{EV}_{\Bsc}$ and $C^{EM}_{\Bsc}$ of size $\alphsize\times\alphsize$. 

For an alphabet size $\alphsize$, we can treat the empirical probabilities estimates $\Vbempprob$ and $\VUempprob$ as a $\alphsize$-dimensional vector such that $j^{th}$ entry denote the empirical probability of the $j^{th}$ symbol. 
Recall that $\VUempprob$ is the mean of $\Vbempprob$, $b\in \Bsc$.

The first covariance matrix, $C^{EV}_{\Bsc}$, is the covariance matrix of $\Vbempprob$ for $b\in \Bsc$, with entries for $j,l\in\alphabet$,
\[
C^{EV}_{\Bsc}(j,l) = \frac 1{|{\Bsc}|}\sum_{b\in {\Bsc}} (\Vbempprob(j)-\VUempprob(j))(\Vbempprob(l)-\VUempprob(l)).
\]

The second covariance matrix $C^{EM}_{\Bsc}$, is an expected covariance matrix of $\Vbempprob$ if samples in the batches $b$ were drawn from the distribution $\VUempprob$. Hence, its entries are 
\[
C^{EM}_{\Bsc}(j,l) = -\frac{\VUempprob(j)\VUempprob(l)}{\bsize} \text{ for }j,l\in\alphabet,\, j\neq l, 
\]
and
\[
C^{EM}_{\Bsc}(j,j) = \frac{\VUempprob(j)(1-\VUempprob(j))}{\bsize}.
\]
Let $D_{\Bsc}$ be the difference of the two matrices:
\[
D_{\Bsc} = C^{EV}_{\Bsc}-C^{EM}_{\Bsc}.
\]

For a vector $x\in \{0,1\}^\alphsize$, let 
\[
S(x) \triangleq \{j\in\alphabet: x(j)= 1 \},
\]
be the subset of $\alphabet$ corresponding to the vector $x$.

\paragraph{Observations}
\begin{enumerate}
\item
The sum of elements in any row and or column for both the covariance matrices, and hence also for the difference matrix, is zero, hence 
\[
C^{EV}_{\Bsc} \boldsymbol{1}= C^{EM}_{\Bsc} \boldsymbol{1} = D_{\Bsc} \boldsymbol{1} = \boldsymbol0.
\]
\emph{Proof:}
We show for $C^{EV}_{\Bsc}$, the proof for $C^{EM}_{\Bsc}$ is similar. For any $j\in\alphabet$,
\begin{align*}
  &\sum_{l\in \alphabet} C^{EV}_{\Bsc}(j,l) \\
  &= \frac 1{|{\Bsc}|}\sum_{l\in \alphabet}\sum_{b\in {\Bsc}}
  (\Vbempprob(j)-\VUempprob(j))(\Vbempprob(l)-\VUempprob(l))
  \\
    &= \sum_{b\in {\Bsc}}(\Vbempprob(j)-\VUempprob(j))\sum_{l\in \alphabet} (\Vbempprob(l)-\VUempprob(l))\\
    &= \sum_{b\in {\Bsc}}(\Vbempprob(j)-\VUempprob(j)) (1-1)=0.
    \end{align*}
\item
\label{ob:3}
It is easy to verify that for any vector $x\in \{0,1\}^\alphsize$,
\begin{align*}
    \langle C^{EV}_{\Bsc},xx^\intercal\rangle & = \frac{1}{|{\Bsc}|}\sum_{b\in {\Bsc}} (  \Vbempprob(S(x))- \VUempprob(S(x)))^2\\
&= \overline{\text{V}}_{{\Bsc}}(\alphsubset(x)),
\end{align*}
the empirical variance of $\Vbempprob(S(x))$ for $b\in {\Bsc}$.
Similarly,
\begin{align*}
\langle C^{EM}_{\Bsc},xx^\intercal\rangle &= \frac{\VUempprob(S(x))(1-\VUempprob(S(x)))}{\bsize} \\
&= \var{\VUempprob(S(x))}.
\end{align*}

Therefore,
\begin{align*}
    \langle D_{\Bsc},xx^\intercal\rangle&=\langle C^{EV}_{\Bsc}-C^{EM}_{\Bsc},xx^\intercal\rangle\\
    &= \overline{\text{V}}_{{\Bsc}}(\alphsubset(x))-\var{\VUempprob(S(x))}.
\end{align*}
\item
Note that $y\rightarrow \frac{1}{2}(y+\boldsymbol{1}) $ is a 
1-1 mapping from $\{-1,1\}^\alphsize \rightarrow \{0,1\}^\alphsize$,
and that
\begin{align*}
    &\langle C^{EV}_{\Bsc},\frac{1}{2}(y+\boldsymbol{1})\frac{1}{2}(y+\boldsymbol{1})^\intercal\rangle
    \\
    &= \langle C^{EV}_{\Bsc},\frac{1}{4}(yy^\intercal +\boldsymbol{1}y^\intercal+y\boldsymbol{1}^\intercal+\boldsymbol{1}\boldsymbol{1}^\intercal)\rangle \\
    &=  \frac{1}{4}\langle C^{EV}_{\Bsc},yy^\intercal\rangle. 
\end{align*}
\end{enumerate}
Let 
\[
y = \arg\max_{y\in \{-1,1\}^\alphsize} |\langle D_{\Bsc},yy^\intercal\rangle|. 
\]
Then from $y$ one can recover the corresponding subset $S(x)$, with $x=\frac{1}{2}(y+\boldsymbol{1})$,
maximizing
\[
|\overline{\text{V}}_{{\Bsc}}(\alphsubset(x))-\var{\VUempprob(S(x))}|.
\]
In~\cite{alon2004approximating}, Alon et al. derives a polynomial-time approximation algorithm for the above optimization problem. The algorithm first uses a semi-definite relaxation of the problem and then uses randomized integer rounding techniques based on Grothendieck's Inequality.
Their algorithm recovers $y_{\Bsc}$ such that 
\[
|\langle D_{\Bsc},y_{\Bsc} y_{\Bsc}^\intercal\rangle| \ge 0.56 \max_{y\in\{-1,1\}^\alphsize} |\langle D_{\Bsc},yy^\intercal\rangle|.
\]
Let $x_{\Bsc} = \frac{1}{2}(y+\boldsymbol{1})$. Then from observation 3 it follows that
\[
|\langle D_{\Bsc},x_{\Bsc} x_{\Bsc}^\intercal\rangle| \ge 0.56 \max_{x\in\{0,1\}^\alphsize} |\langle D_{\Bsc},xx^\intercal\rangle|.
\]
Therefore for $\alphsubset^*_{\Bsc} = \alphsubset(x_{\Bsc})$ we get 
\begin{align*}
&|\overline{\text{V}}_{{\Bsc}}(\alphsubset^*_{\Bsc})-\var{\VUempprob(\alphsubset^*_{\Bsc})}|\\
&\ge 0.56\max_{\alphsubsetdef} |\overline{\text{V}}_{{\Bsc}}(\alphsubset)-\var{\VUempprob(\alphsubset)}|.
\end{align*}

\section{Experiments}\label{sec:exp}
We evaluate the algorithm's performance on synthetic data.

We compare the estimator's 
performance with two others: 
1) an oracle that knows the identity of the adversarial batches. The oracle ignores the adversarial batches and computes the empirical estimators based on remaining batches and is not affected by the presence of adversarial batches. The estimation error achieved by the oracle is the best one could get, even without the adversarial corruptions. 
2) a naive-empirical estimator that computes the empirical distribution of all samples across all batches.

Two non-trivial estimators have been derived for this problem. Both have prohibitively large sample and/or computational complexity.
The estimator in~\cite{qiao2017learning} has run time exponential in $\alphsize$, making it impractical. The time and sample complexities of the estimator in~\cite{chen2019efficiently} are either super-polynomial or a high-degree polynomial, depending on the range of the parameters  ($\alphsize$,$\bsize$,$1/\advfrac$), rendering their simulation prohibitively high as well. 

We tried different adversarial distributions and
found that the major determining factor of the effectiveness of the adversarial batches is the distance between the
adversarial distribution and the target distribution. 
If the adversarial distribution is too far, then adversarial batches are easier to detect. For this scenario our algorithm is even more effective than the performance limits shown in Theorem~\ref{th:mainresult} and the performance between our algorithm and the oracle is almost indistinguishable.
When the adversarial distribution is very close to the target distribution $\targetdis$, the adversarial batches don't affect the estimation error by much. The estimator has the worst performance when the adversary chooses the distribution of its batches at an optimal distance from target distribution. This optimal distance differs with the value of the algorithm's  parameters. Hence for each choice of algorithm parameters, we tried adversarial distributions at varying distances and reported the worst performance of our estimator.

\begin{figure*}
\begin{subfigure}{.45\textwidth}
  \centering
  \includegraphics[width=1.1\linewidth]{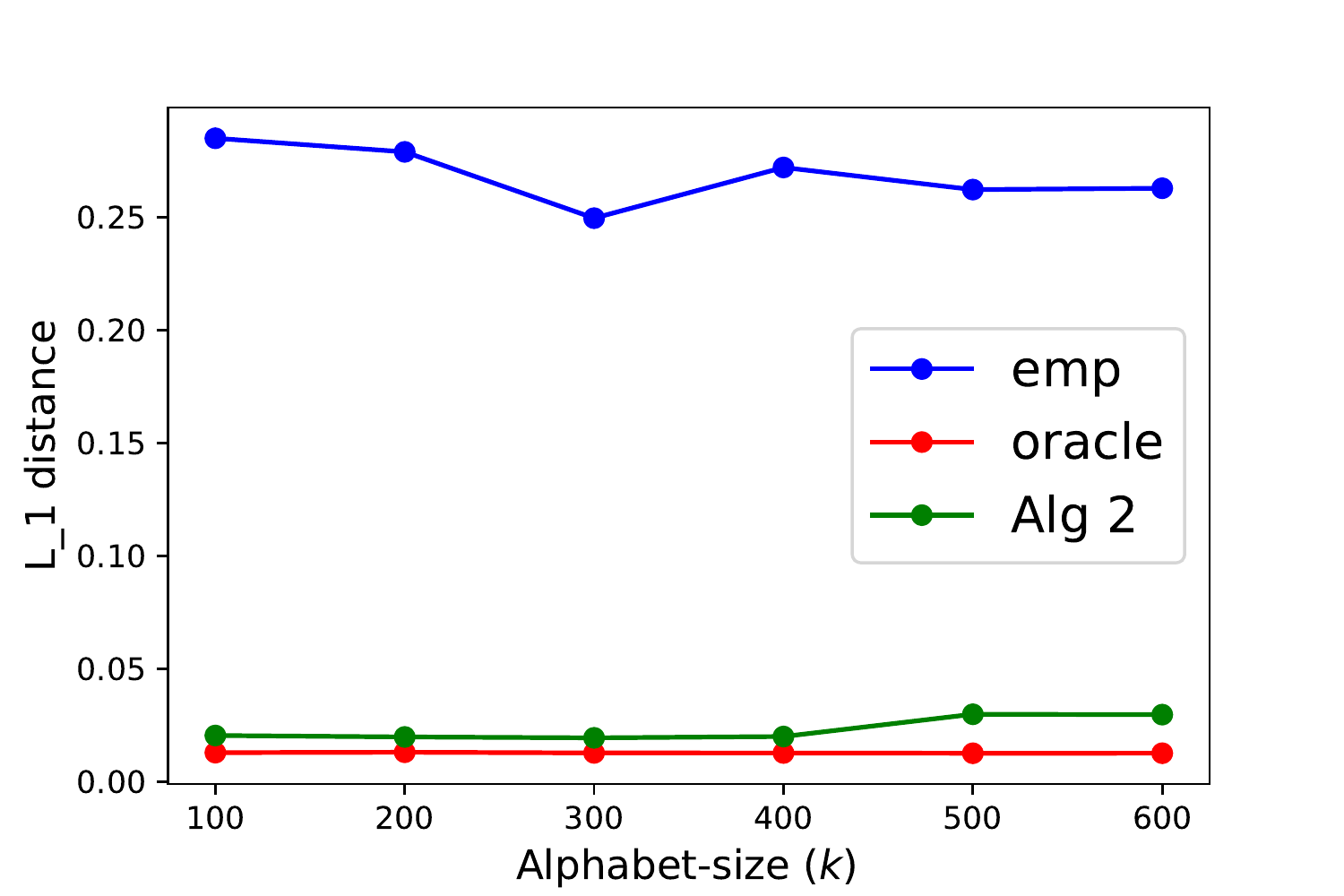}
  \caption{Support size $\alphsize$}
  \label{fig:sfig2}
\end{subfigure}
\begin{subfigure}{.45\textwidth}
  \centering
  \includegraphics[width=1.1\linewidth]{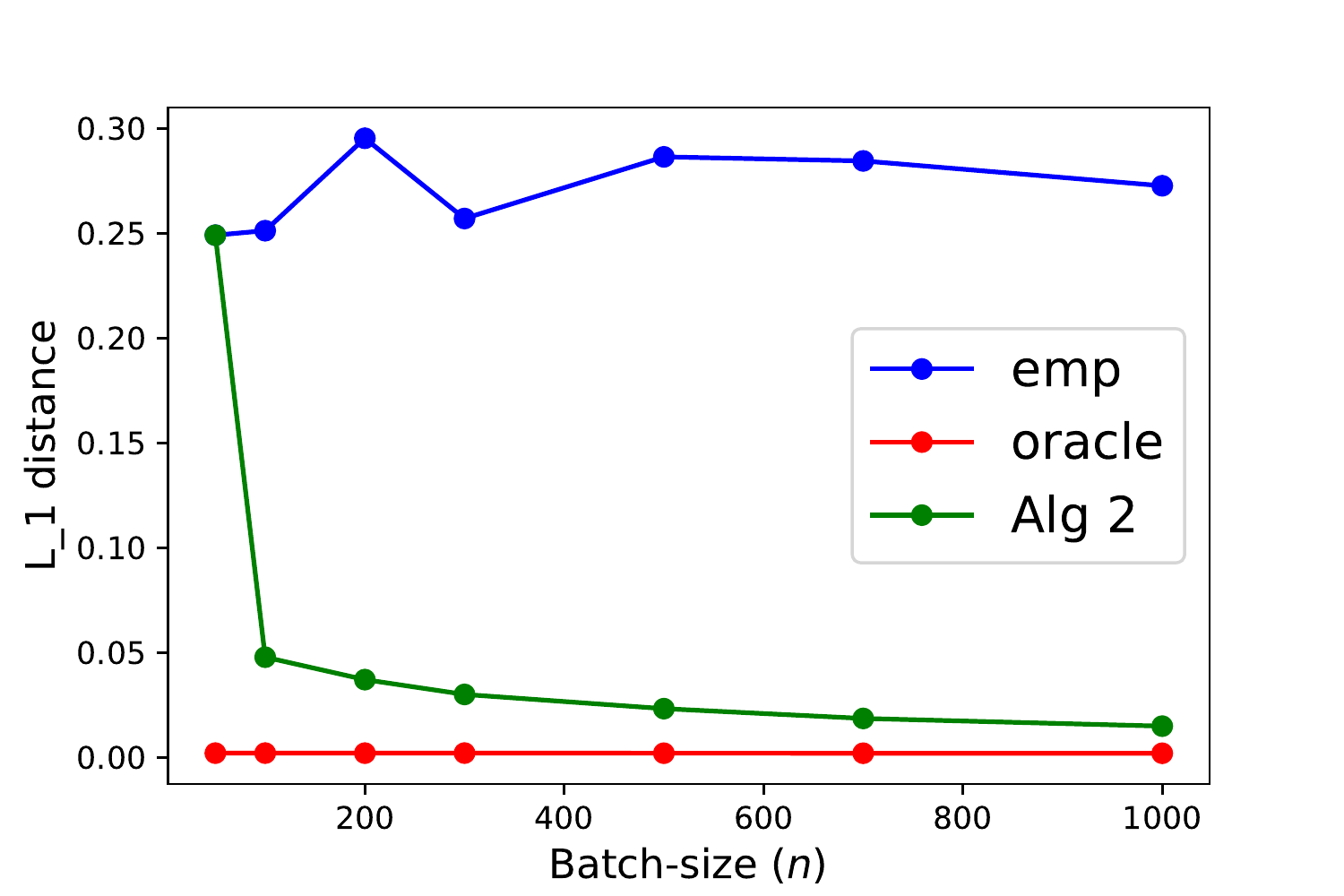}
  \caption{Batch size $\bsize$}
  \label{fig:sfig4}
\end{subfigure}
\begin{subfigure}{.45\textwidth}
  \centering
  \includegraphics[width=1.1\linewidth]{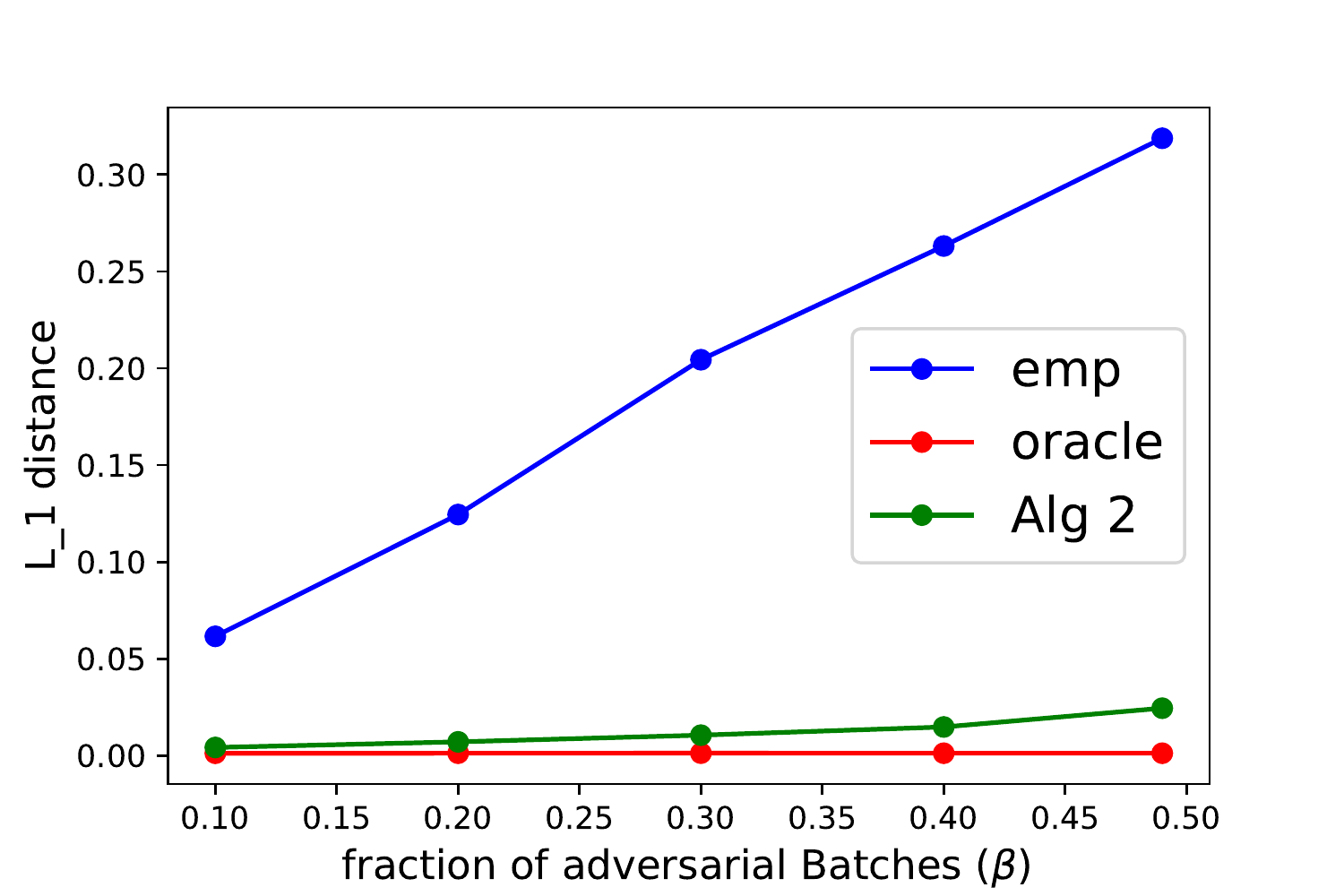}
  \caption{Adversarial batches fraction $\advfrac$}
  \label{fig:sfig1}
\end{subfigure}
\hskip 0.5in
\begin{subfigure}{.45\textwidth}
  \centering
  \includegraphics[width=1.1\linewidth]{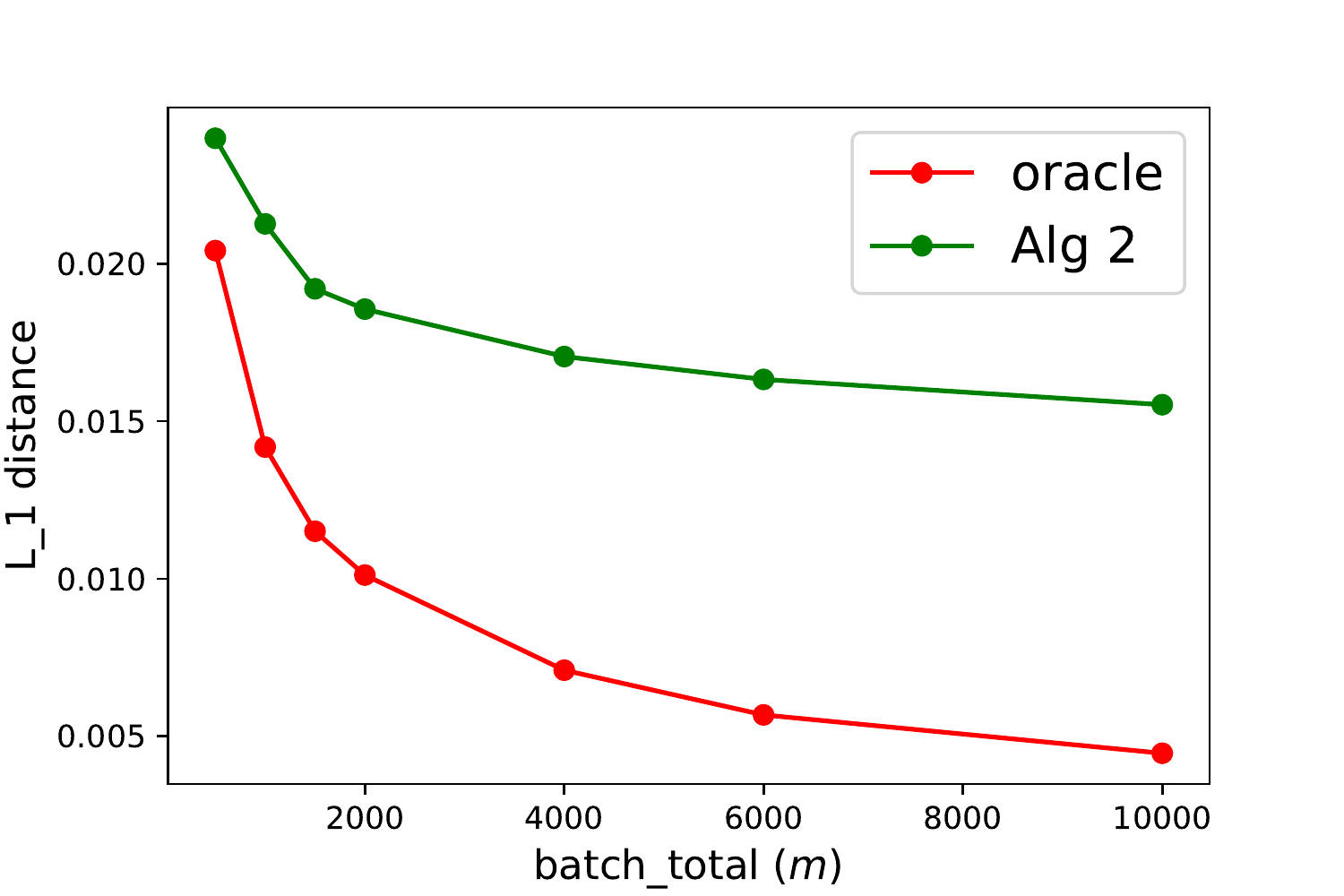}
  \caption{Number of batches $\btotal$}
  \label{fig:sfig3}
\end{subfigure}
\vskip 0.2in
\caption{$L_1$ estimation error with different Parameters}
\label{fig:fig}
\vskip 0.0in
\end{figure*}

All experiments were performed on a laptop with a configuration of 2.3 GHz Intel Core i7
CPU and 16 GB of RAM. We choose the parameters for the algorithm by using a small simulation. We provide all codes and implementation details in the supplementary material.

We show four plots here. In each plot we vary one parameter and plot the $L_1$ loss incurred by all three estimators. For each experiment, we ran ten trials and reported the average $L_1$ distance achieved by each estimator. 

For the first plot we fix batch-size $\bsize = 1000$ and $\advfrac = 0.4$ and vary alphabet size $\alphsize$. We generate $\btotal= \alphsize/(0.4)^2$ batches for each $\alphsize$. Our algorithm's performance show no significant change as the size of alphabet increases and its performance nearly matches the performance of the Oracle and outperforms the naive estimator by order of magnitudes.  

In the the second plot we fix $\advfrac=0.4$ and $\alphsize=200$ and vary batch size $\bsize$. We choose $\btotal = 40 \times \frac\alphsize{\beta^2}\times\frac{1000}\bsize$, this keeps the total number of samples $\bsize\times\btotal$, constant for different $\bsize$. We see that the $L_1$ loss incurred by our estimator is much smaller than the naive empirical estimator and it diminishes as the batch size increases and comes very close to the performance of the oracle. Note that this roughly matches the decay $O(1/\sqrt{\bsize})$ of $L_1$ error characterized in both the lower and the upper bounds.  

For the next plot we fix batch size $\bsize = 1000$ and $\alphsize=200$. The number of good batches $(1-\advfrac)\btotal= 400\alphsize$ is kept same. We vary the adversarial noise level and plot the performance of all estimators. We tested our estimator for fraction of adversarial batches as high as $0.49$ and still our estimator recovered $\targetdis$ to a good accuracy and in fact at the lower noise level it is essentially similar to  the oracle and it increases (near) linearly with the noise level $\advfrac$ as in Theorem~\ref{th:mainresult},

In the last plot we fixed all other parameters $\bsize = 1000$, $\alphsize=200$, and $\advfrac = 0.4$ and varied the number of batches. We see that the performance of oracle keep improving as number of bathes increases. But for our algorithm it decreases initially but later it saturates as predicted by adversarial batch lower bound.


\section*{Acknowledgements}
We thank Vaishakh Ravindrakumar and Yi Hao for helpful comments in the prepration of this manuscript.

We are grateful to the National Science
Foundation (NSF) for supporting this work through grants
CIF-1564355 and CIF-1619448.

\bibliographystyle{alpha}
\bibliography{ref}
\newpage

\onecolumn
\appendix
\section{Proof of Lemma ~\ref{lem:prophold}}\label{sec:goodprop}
In this section, we show that conditions~\ref{con}-\ref{con3} holds with high probability and prove Lemma~\ref{lem:prophold}.
To prove the lemma we first prove three auxiliary lemmas; each of these three Lemma will lead to one of the three conditions in Lemma~\ref{lem:prophold}. These three lemmas characterizes the statistical properties of the collection of good batches $\bgood$. We state and prove these lemmas in the next subsection.
\subsection{Statistical Properties of the Good Batches}

Recall that, for a good batch $b\in \bgood$ and subset $\alphsubsetdef$, $\mathbf{1}_S(X_{i}^b)$, for $i\in [\bsize]$, are i.i.d. indicator random variables and $\bempprob$ is the mean of these $\bsize$ indicator variables. Since the indicator random variables are sub-gaussian, namely $\mathbf{1}_S(X_{i}^b)\sim \text{subG}(\subsetprobtarget,1/4)$, the mean $\bempprob$ satisfies
$\bempprob \sim \text{subG}(\subsetprobtarget ,1/4\bsize)$. $ \text{subG}(.)$ is used to denote a sub-gaussian distribution.
This observation plays the key role in the proof of all three auxiliary lemmas in this section.

The first lemma among these three lemma show that for any fixed subset $\alphsubsetdef$, $\bempprob$ for most of the good batches is close to $\subsetprobtarget$. This lemma is used to show Condition~\ref{con}.
\begin{lemma}\label{lem:withinmean}
For any $\epsilon\in (0,1/4]$ and $\bgoodsize \ge 12\alphsize/\epsilon$, $\forall\,\alphsubsetdef $, 
with probability $\ge 1-e^{-\alphsize}$,
\begin{align}
    \big|\big\{b\in \bgood: |\bempprob-\subsetprobtarget|\ge \sqrt{\frac{\ln (1/\epsilon)}\bsize} \big\}\big|\le \epsilon\bgoodsize. \nonumber
\end{align}
\end{lemma}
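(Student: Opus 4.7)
The plan is to fix a subset $\alphsubsetdef$, bound the failure probability for that single $\alphsubset$ tightly enough, and then union-bound over the $2^\alphsize$ subsets of $\alphabet$. For each $b\in\bgood$ the indicators $\mathbf{1}_\alphsubset(X_i^b)$, $i\in[\bsize]$, are i.i.d.\ $\mathrm{Bernoulli}(\subsetprobtarget)$, so $\bempprob$ is their empirical mean, sub-Gaussian with parameter $1/(4\bsize)$ and centered at $\subsetprobtarget$. Hoeffding's inequality (equivalently, the sub-Gaussian tail bound) yields
\[
\Pr\!\left[\,|\bempprob-\subsetprobtarget|\ge \sqrt{\ln(1/\epsilon)/\bsize}\,\right]\le 2e^{-2\ln(1/\epsilon)}=2\epsilon^2\le \epsilon/2,
\]
where the last step uses $\epsilon\le 1/4$. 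Let $Z_b$ be the indicator of the deviation event above; then $\{Z_b\}_{b\in\bgood}$ are i.i.d.\ Bernoulli with mean $p\le \epsilon/2$.

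The crucial step is to bound $\Pr\!\left[\sum_{b\in\bgood} Z_b\ge \epsilon\bgoodsize\right]$ sharply enough to survive the union bound over $2^\alphsize$ subsets under the hypothesis $\bgoodsize\ge 12\alphsize/\epsilon$. An additive Hoeffding bound on the sum gives only $\exp(-\Omega(\epsilon^2\bgoodsize))$, which would require $\bgoodsize\gtrsim \alphsize/\epsilon^2$ --- too weak. Instead I apply a multiplicative Chernoff bound: setting $\mu\triangleq E\!\left[\sum_{b\in\bgood} Z_b\right]\le \epsilon\bgoodsize/2$ and $(1+\delta)\mu=\epsilon\bgoodsize$, so that $r\triangleq 1+\delta\ge 2$, the standard bound $\Pr[X\ge (1+\delta)\mu]\le \exp\!\big(-\mu((1+\delta)\ln(1+\delta)-\delta)\big)$ rewrites as $\exp\!\big(-\epsilon\bgoodsize\,(\ln r-1+1/r)\big)$. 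Since $r\mapsto \ln r - 1 + 1/r$ is increasing on $[1,\infty)$ with value $\ln 2 - 1/2\approx 0.19$ at $r=2$, this is at most $\exp(-0.19\,\epsilon\bgoodsize)$.

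Plugging in $\bgoodsize\ge 12\alphsize/\epsilon$ yields a single-subset failure probability at most $\exp(-0.19\cdot 12\,\alphsize)\le \exp(-2\alphsize)\le 2^{-\alphsize}e^{-\alphsize}$. Summing over all $2^\alphsize$ subsets of $\alphabet$ gives total failure probability at most $e^{-\alphsize}$, as claimed. The main obstacle is noticing that a plain Hoeffding bound on $\sum_{b\in\bgood} Z_b$ wastes a factor of $\epsilon$ in the exponent and must be replaced by the multiplicative Chernoff bound; the latter exploits the small Bernoulli mean $p\le\epsilon/2$ to make the exponent linear (rather than quadratic) in $\epsilon$, which is exactly what allows the sample budget $12\alphsize/\epsilon$ in the hypothesis to suffice for the union bound over $2^\alphsize$ subsets.
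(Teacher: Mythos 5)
Your proof is correct and follows essentially the same route as the paper: a per-batch Hoeffding bound giving deviation probability at most $\epsilon/2$, a multiplicative Chernoff bound on the number of deviating batches, and a union bound over the $2^\alphsize$ subsets. The paper invokes Chernoff in the slightly cruder form $\Pr[X\ge 2\mu]\le e^{-\mu/3}$ while you use the sharper relative-entropy form $\exp\bigl(-\mu((1+\delta)\ln(1+\delta)-\delta)\bigr)$, but both deliver the same linear-in-$\epsilon$ exponent that is needed for $\bgoodsize\ge 12\alphsize/\epsilon$ to survive the union bound.
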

\begin{proof}
From Hoeffding's inequality, for $b\in \bgood$ and $\alphsubsetdef$,  
\[
\Pr\Big[ |\bempprob-\subsetprobtarget| \ge  {\sqrt{\frac{\ln (1/\epsilon)}\bsize}} \Big] \le 2 e^{- 2 \ln (1 /\epsilon) } \le 2{\epsilon}^{2} \le \epsilon/2 . 
\]
Let $\mathbbm{1}_b(\alphsubset)$ be the indicator random variable that takes the value 1 iff $|\bempprob-\subsetprobtarget| \ge  \sqrt{\ln (1/\epsilon)/\bsize}$. Therefore, for $b\in \bgood$, $E[\mathbbm{1}_b(\alphsubset) ] \le \epsilon/2 $.
Using the Chernoff bound,
\[
\Pr[ \sum_{b\in\bgood}\mathbbm{1}_b(\alphsubset) \ge 
{\epsilon} \bgoodsize ]
\le  e^{- \frac 1 3 \cdot\frac {\epsilon} 2 \bgoodsize}
\le  e^{-  2\alphsize} .
\]
Taking the union bound over all $2^\alphsize$ subsets $\alphsubset$ completes the proof.
\end{proof}

The next lemma show that even upon removal of any small fraction of good batches from $\bgood$, the empirical mean and the variance of the remaining sub-collection of batches approximate the distribution mean and the variance well enough.

\begin{lemma}\label{lem:22concentration}
For any $\epsilon\in (0,1/4]$, and $\bgoodsize \ge \frac{\alphsize}{\epsilon^2\ln (e/\epsilon)}$. Then $\forall\, \alphsubsetdef$ and $\forall \,\Gsc \subseteq \bgood$ of size $|\Gsc |\ge (1- \epsilon)\bgoodsize$, 
with probability $\ge 1-6e^{-\alphsize}$,
\begin{align}\label{eq:lem1main}
   \Big|\UGempprob- \subsetprobtarget | \le 3\epsilon\sqrt{\frac{\ln (e/\epsilon) } {\bsize}}
\end{align}
and
\begin{align}\label{eq:lem1main2}
   \Big|\frac{1}{|\Gsc |}\sum_{b\in \Gsc } (\bempprob- \subsetprobtarget)^2 - \var{\subsetprobtarget}\Big|\le 32{\frac{ \epsilon\ln (e/\epsilon) } {\bsize}}.
\end{align}
\end{lemma}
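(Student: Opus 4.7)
\textbf{Proof plan for Lemma \ref{lem:22concentration}.} Let $Z_b(S) \triangleq \bar\mu_b(S)-p(S)$. For $b\in \bgood$ the $Z_b(S)$ are independent, zero‑mean, sub‑Gaussian with variance proxy $1/(4n)$; hence $E[Z_b(S)^2]=V(p(S))$ and $E[Z_b(S)^4]=O(1/n^2)$. All bounds below are proved for a fixed $S$ and then a union bound over the $2^{k}$ subsets is taken at the very end; the hypothesis $|\bgood|\geq k/(\epsilon^2\ln(e/\epsilon))$ is exactly what is needed to absorb this $2^k$ factor.

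\textbf{The mean inequality \eqref{eq:lem1main}.} For any admissible $\Gsc$, write
\begin{align*}
|\Gsc|\,\bigl|\UGempprob-p(S)\bigr|\;\leq\;\Bigl|\sum_{b\in\bgood}Z_b(S)\Bigr|\;+\;\sum_{b\in\bgood\setminus\Gsc}\bigl|Z_b(S)\bigr|.
\end{align*}
For the first term I would apply Hoeffding's inequality, $\Pr\!\bigl(|\sum_{b\in\bgood}Z_b(S)|\ge t\bigr)\le 2e^{-2nt^{2}/|\bgood|}$, with $t$ chosen so that after the $2^{k}$ union bound the contribution is $O(\epsilon|\bgood|\sqrt{\ln(e/\epsilon)/n})$; the hypothesis on $|\bgood|$ makes this clean. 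For the removal term I would run a layer‑cake argument on a dyadic ladder $\epsilon_{j}=\epsilon 2^{-j}$ for $j=0,1,\dots,J\asymp\log_{2}(\epsilon|\bgood|/k)$: Lemma \ref{lem:withinmean} applied at scale $\epsilon_j$ (its per‑subset Chernoff bound is $e^{-\epsilon_j|\bgood|/6}\le e^{-2k}$, enough to absorb both the $2^{k}$ union bound and the $J$ scales) says at most $\epsilon_{j}|\bgood|$ good batches have $|Z_b(S)|\ge\sqrt{\ln(1/\epsilon_{j})/n}$. Since $|\bgood\setminus\Gsc|\le\epsilon|\bgood|$, summing the contributions of the geometric shells gives
\begin{align*}
\sum_{b\in\bgood\setminus\Gsc}|Z_b(S)|\;\le\;\sum_{j\ge 0}\epsilon_{j}|\bgood|\sqrt{\tfrac{\ln(2^{j+1}/\epsilon)}{n}}\;=\;O\!\Bigl(\epsilon|\bgood|\sqrt{\tfrac{\ln(e/\epsilon)}{n}}\Bigr),
\end{align*}
and dividing by $|\Gsc|\ge(1-\epsilon)|\bgood|$ yields the $3\epsilon\sqrt{\ln(e/\epsilon)/n}$ bound after nailing down constants.

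\textbf{The variance inequality \eqref{eq:lem1main2}.} I would decompose
\begin{align*}
\tfrac{1}{|\Gsc|}\sum_{b\in\Gsc}(Z_b(S))^{2}-V(p(S))\;=\;\tfrac{|\bgood|}{|\Gsc|}\Bigl[\tfrac{1}{|\bgood|}\sum_{b\in\bgood}\!\bigl(Z_b(S)^{2}-V(p(S))\bigr)\Bigr]\;-\;\tfrac{1}{|\Gsc|}\!\!\!\sum_{b\in\bgood\setminus\Gsc}\!\!\!Z_b(S)^{2}\;+\;\tfrac{|\bgood|-|\Gsc|}{|\Gsc|}V(p(S)).
\end{align*}
The centred full sum is controlled by Bernstein's inequality applied to the $W_b(S)=Z_b(S)^{2}-V(p(S))$, which are bounded by $1$ with variance $O(1/n^2)$; the choice of $|\bgood|$ makes the Bernstein deviation $O(\epsilon\ln(e/\epsilon)/n)$ per subset after the $2^{k}$ union bound. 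The removal term is handled by the same dyadic layering as in the mean inequality, now summing $Z_b(S)^{2}$ rather than $|Z_b(S)|$, which gives $\sum_{b\in\bgood\setminus\Gsc}Z_b(S)^{2}=O(\epsilon|\bgood|\ln(e/\epsilon)/n)$. The last term is at most $\epsilon\,V(p(S))\cdot|\bgood|/|\Gsc|\le\epsilon/(4n)$, which is dominated by the other two.

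\textbf{Main obstacle.} The genuine technical care lies in making the probability budget balance: each of the two inequalities must hold uniformly over all $2^{k}$ subsets and, in the layered steps, across $\Theta(\log(1/\epsilon))$ scales, while only one global failure probability $6e^{-k}$ is allowed. The condition $|\bgood|\ge k/(\epsilon^{2}\ln(e/\epsilon))$ is essentially tight for this: it must be large enough that the smallest dyadic layer $\epsilon_{J}|\bgood|$ is still $\gtrsim k$ so that its Chernoff tail is $e^{-\Omega(k)}$, but no larger, to keep the statement optimal. The remaining work is the bookkeeping of constants (hence the specific factors $3$ and $32$), driven by the two convergent geometric series and the Bernstein constants.
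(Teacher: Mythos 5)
Your high-level decomposition — split $\sum_{b\in\Gsc}$ into the full sum over $\bgood$ minus the removed part $\bgood\setminus\Gsc$, control the full sum by Hoeffding (mean) and Bernstein (variance), then control the removed part separately, finishing with a $2^k$ union bound — matches the paper's. The substantive divergence, and the gap, is in how you control the removed part. The paper applies Hoeffding directly to $\sum_{b\in U_G} Z_b(S)$ for every fixed sub-collection $U_G\subseteq\bgood$ of size at most $\epsilon\bgoodsize$, and union-bounds over all $\lesssim e^{1.5\epsilon\bgoodsize\ln(e/\epsilon)}$ such sub-collections; the resulting probability $\lesssim e^{-0.5\epsilon\bgoodsize\ln(e/\epsilon)} \le e^{-2k}$ makes the budget close cleanly, and the bound controls the signed sum of the worst-case sub-collection (including the batches with the largest deviations) by a single clean Hoeffding application. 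You instead run a dyadic layer cake on Lemma~\ref{lem:withinmean}, and this does not close.

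Concretely, the layering argument only controls batches whose deviation lies below the threshold at the finest scale you can afford. Lemma~\ref{lem:withinmean} at scale $\epsilon_j$ has a per-subset failure probability of order $e^{-\epsilon_j\bgoodsize/6}$, so it can only be applied down to $\epsilon_J\bgoodsize\gtrsim k$, i.e.\ $\epsilon_J\gtrsim k/\bgoodsize\gtrsim\epsilon^2\ln(e/\epsilon)$. Beyond that scale there can be up to $\Theta(k)$ batches with $|Z_b(S)|\ge\sqrt{\ln(1/\epsilon_J)/n}$, and the layer cake gives you nothing about how large those deviations are: even a uniform tail bound $\max_{b,S}|Z_b(S)|\lesssim\sqrt{k/n}$ leaves you with a tail contribution of order $k\sqrt{k/n}$, which after dividing by $|\Gsc|\approx\bgoodsize$ is of order $\epsilon^2\ln(e/\epsilon)\sqrt{k/n}$. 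For the target bound $\epsilon\sqrt{\ln(e/\epsilon)/n}$ to dominate this you would need $\epsilon\sqrt{k\ln(e/\epsilon)}\lesssim1$, which fails already for $\epsilon$ a constant and $k$ large. Your written sum $\sum_{j\ge0}$ quietly assumes the ladder can run to $j=\infty$, but the hypothesis $\bgoodsize\ge k/(\epsilon^2\ln(e/\epsilon))$ nowhere near supports applying Lemma~\ref{lem:withinmean} at arbitrarily fine scales. To repair this you essentially need what the paper does: bound $|\sum_{b\in U_G}Z_b(S)|$ directly for every small $U_G$, rather than bounding $\sum_{b\in U_G}|Z_b(S)|$ scale-by-scale. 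The same issue repeats in your handling of the removal term $\sum_{b\in\bgood\setminus\Gsc}Z_b(S)^2$ for the variance inequality.
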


\begin{proof}
From Hoeffding's inequality,
\begin{align}\label{eq:lem1all}
\Pr\Big[\bgoodsize|\bar p_{\bgood} (S) - \subsetprobtarget | \ge \bgoodsize {\epsilon}&\sqrt{\frac{\ln (e/\epsilon) } {\bsize}} \Big] = \Pr\Big[\big|\sum_{b\in \bgood} (\bempprob - \subsetprobtarget)\big| \ge \bgoodsize {\epsilon}\sqrt{\frac{\ln (e/\epsilon) } {\bsize}} \Big]\nonumber\\
&\le 2 e^{-\frac{\bgoodsize\epsilon^2}{2/(4\bsize)}\cdot{\frac{\ln (e/\epsilon) } {\bsize}}} = 2 e^{-{2\bgoodsize\epsilon^2}\ln (e/\epsilon)} \le 2 e^{-2\alphsize}.
\end{align}
Similarly, for a fix sub-collection $U_{G} \subseteq \bgood$ of size $1\le |U_{G}| \le  \epsilon\bgoodsize$,
\begin{align*}
\Pr\Big[|U_{G}|\cdot|\bar p_{U_{G}} (S) - \subsetprobtarget|\ge {\epsilon\bgoodsize}\sqrt{\frac{ \ln (e/\epsilon) } {\bsize}}\Big] &= 
\Pr\Big[\Big|\sum_{b\in U_{G}} (\bempprob- \subsetprobtarget)\Big| \ge {\epsilon\bgoodsize}\sqrt{\frac{ \ln (e/\epsilon) } {\bsize}} \Big] \\
&\le 2 e^{-2\ln(e/\epsilon) \frac{(\epsilon\bgoodsize)^2}{|U_{G}|}}
\le 2 e^{- 2\epsilon\bgoodsize\ln(e/\epsilon)},    
\end{align*}
where the last inequality used $|U_{G}|\le\epsilon\bgoodsize$. Next, the number of sub-collections (non-empty) of $\bgood$ with size $\le \epsilon\bgoodsize$ is bounded by
\begin{align}\label{eq:numberofsub}
\sum_{j=1}^{\lfloor \epsilon\bgoodsize\rfloor}  {\bgoodsize \choose j} 
\le  \epsilon\bgoodsize {\bgoodsize \choose {\lfloor \epsilon\bgoodsize\rfloor}}
\le \epsilon\bgoodsize \Big(\frac{e \bgoodsize }{\epsilon\bgoodsize}\Big)^{\epsilon\bgoodsize} \le  e^{\epsilon\bgoodsize\ln (e/\epsilon)+\ln(\epsilon\bgoodsize) }<  e^{\frac 3 2\epsilon\bgoodsize\ln (e/\epsilon)},    
\end{align}
where last of the above inequality used ${\ln (\epsilon\bgoodsize)} < {\epsilon\bgoodsize}/2 $ and $\ln(e/\epsilon)\ge 1$.
Then, using the union bound, $\forall\ U_{G} \subseteq \bgood$ such that $|U_{G}| \le \epsilon\bgoodsize$, we get
\begin{align}\label{eq:lem1sub}
\Pr\Big[|U_{G}|\cdot|\bar p_{U_{G}} (S) - \subsetprobtarget |\ge {\epsilon\bgoodsize}\sqrt{\frac{ \ln (e/\epsilon) } {\bsize}}\Big]
\le  2 e^{-\frac 1 2\epsilon\bgoodsize\ln (e/\epsilon)}< 2 e^{-\frac\alphsize{2\epsilon}}  < 2 e^{-2 \alphsize}.
\end{align}
For any sub-collection $\Gsc \subseteq \bgood$ with $|\Gsc |\ge (1-\epsilon)\bgoodsize$, 
\begin{align*}
    |\sum_{b\in \Gsc } (\bempprob - \subsetprobtarget)| &= |\sum_{b\in \bgood} (\bempprob - \subsetprobtarget)-\sum_{b\in \bgood/\Gsc } (\bempprob - \subsetprobtarget)|\\
    &\le \Big|\sum_{b\in \bgood} (\bempprob - \subsetprobtarget)\Big|+\Big|\sum_{b\in \bgood/\Gsc } (\bempprob - \subsetprobtarget)\Big|\\
    &\le |{\bgood}| \times |\bar p_{\bgood} (S) - \subsetprobtarget|+\max_{U_{G}: |U_{G}|\le \epsilon\bgoodsize } |U_{G}| \times|\bar p_{U_{G}} (S) - \subsetprobtarget|\\
    &\le {2\epsilon\bgoodsize}\sqrt{\frac{\ln (e/\epsilon)} \bsize},
\end{align*}
with probability $\ge 1-2 e^{-2\alphsize}-2 e^{-2\alphsize}\ge 1-4 e^{-2\alphsize}$. Then
\begin{align*}
    |\UGempprob - \subsetprobtarget| &= \frac{ 1}{|\Gsc |}\Big|\sum_{b\in \Gsc } (\bempprob - \subsetprobtarget)\Big| \le 2\frac{\epsilon\bgoodsize}{|\Gsc |}\sqrt{\frac{\ln (e/\epsilon)} \bsize}\\
    &\le
    \frac{2\epsilon}{(1-\epsilon)}\sqrt{\frac{\ln (e/\epsilon) }\bsize }
    <
    {3\epsilon}\sqrt{\frac{\ln (e/\epsilon) }\bsize },
\end{align*}
with probability $\ge 1-4 e^{-2\alphsize}$. 
The last step used $\epsilon \le 1/4$. 
Since there are $2^{\alphsize}$ different choices for $\alphsubsetdef$, from the union bound we get,
\[
\Pr\Big[\bigcup_{\alphsubsetdef} \Big \{|\UGempprob- \subsetprobtarget| > {4\epsilon}\sqrt{\frac{\ln (e/\epsilon) }\bsize } \Big \}\Big]\le 4 e^{-2\alphsize}\times 2^{\alphsize}= 4 e^{-\alphsize}.
\]
This completes the proof of \eqref{eq:lem1main}. 

Let $Y_b = (\bempprob- \subsetprobtarget)^2 - \var{\subsetprobtarget} $.
For $b\in \bgood$,  $\bempprob- \subsetprobtarget \sim \text{subG}(1/4\bsize)$, therefore 
\[(\bempprob- \subsetprobtarget)^2 -E (\bempprob- \subsetprobtarget)^2 = Y_b
\sim \text{subE}(\frac {16}{4\bsize}) = \text{subE}(\frac {4}{\bsize}).\] 
Here $\text{subE}$ is sub exponential distribution \cite{rigoll2015stat}. Then Bernstein's inequality gives:
\[
\Pr[\Big|\sum_{b\in \bgood} Y_b\Big| \ge 8 \bgoodsize \frac {\epsilon} {{\bsize}}\ln (e/\epsilon)  ] \le 2 e^{-\frac{\bgoodsize}2\big(\frac{8\epsilon\ln (e/\epsilon)/n}{4/n}\big)^2}=2 e^{-2{\bgoodsize}\epsilon^2\ln^2 (e/\epsilon)} \le 2 e^{-2\alphsize}.
\]
Next, for a fix sub-collection $U_{G} \subseteq \bgood$ of size $1\le |U_{G}| \le  \epsilon\bgoodsize$,
\begin{align*}
\Pr\Big[\Big|\sum_{b\in U_{G}} Y_b\Big| \ge  16{\epsilon\bgoodsize}{\frac{ \ln (e/\epsilon) } {\bsize}} \Big] 
&\le 2 e^{-\frac{16{\epsilon\bgoodsize}{\frac{ \ln (e/\epsilon) } {\bsize}}}{2\times 4/n}}\\
&\le 2 e^{-2\epsilon\bgoodsize\ln(e/\epsilon)}.    
\end{align*}
Then following the same steps as in the proof of \eqref{eq:lem1main} one can complete the proof of \eqref{eq:lem1main2}.
\end{proof}

To state the next lemma, we make use of the following definition.
For a subset $\alphsubsetdef$, let \[
\bgood^{d}(\alphsubset,\epsilon)\triangleq \big\{b\in \bgood: |\bempprob-\subsetprobtarget|\ge 2 \sqrt{\frac{\ln (6e/\epsilon)}\bsize}) \big\}
\]
be the sub-collection of batches for which empirical probabilities $\bempprob$ are far from $\subsetprobtarget$ for a given set $\alphsubset$. 

The last lemma of the section upper bounds the total squared deviation of empirical probabilities $\bempprob$ from $\subsetprobtarget$ for batches in sub-collection $\bgood^{d}(\alphsubset,\epsilon)$. It helps in upper bounding the corruption for good batches and show that Condition~\ref{con3} holds with high probability.

\begin{lemma}\label{lem:lemsqdev}
For any $0<\epsilon< 1/2$, and $\bgoodsize \ge \frac{120\alphsize}{\epsilon\ln (e/\epsilon)}$. Then $\forall\, \alphsubsetdef$, 
with probability $\ge 1-2e^{-\alphsize}$,
\begin{align}
    |\bgood^{d}(\alphsubset,\epsilon)|\le \frac{\epsilon}{40}\bgoodsize,
\end{align}
and
\begin{align}\label{eq:lemsqudev}
  \sum_{b\in \bgood^{d}(\alphsubset,\epsilon)} (\bempprob- \subsetprobtarget)^2 < \frac{\epsilon}{2}\bgoodsize {\frac{ \ln (e/\epsilon) } {\bsize}}.
\end{align}
\end{lemma}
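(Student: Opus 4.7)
The strategy is to prove the two claims in turn, exploiting the fact that $\bempprob-\subsetprobtarget\sim\mathrm{subG}(0,1/(4\bsize))$ for $b\in\bgood$, together with Chernoff and union bounds over the $2^\alphsize$ subsets $\alphsubsetdef$.

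For the first bound, I would follow the template of Lemma~\ref{lem:withinmean}. Setting $t_0=2\sqrt{\ln(6e/\epsilon)/\bsize}$, Hoeffding's inequality gives $\Pr[|\bempprob-\subsetprobtarget|\ge t_0]\le 2(\epsilon/(6e))^8$, which is far below $\epsilon/80$. Since $|\bgood^{d}(\alphsubset,\epsilon)|$ is a sum of $\bgoodsize$ independent indicators with this tiny mean, a Chernoff bound yields $\Pr\bigl[|\bgood^{d}(\alphsubset,\epsilon)|\ge\epsilon\bgoodsize/40\bigr]\le e^{-c_1\epsilon\bgoodsize\ln(1/\epsilon)}$ for some absolute constant $c_1$; under the hypothesis $\bgoodsize\ge 120\alphsize/(\epsilon\ln(e/\epsilon))$, a union bound over the $2^\alphsize$ subsets keeps the total failure probability at $\le e^{-\alphsize}$.

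For the second bound, I would use a dyadic layer decomposition $L_j=\{b\in\bgood:\,t_j\le|\bempprob-\subsetprobtarget|<t_{j+1}\}$ with $t_j=t_0\cdot 2^{j/2}$, and control the sum layer-by-layer via
\[
\sum_{b\in\bgood^{d}(\alphsubset,\epsilon)}(\bempprob-\subsetprobtarget)^2 \le \sum_j |L_j|\,t_{j+1}^2.
\]
Hoeffding gives $p_j\triangleq\Pr[|\bempprob-\subsetprobtarget|\ge t_j]\le 2(\epsilon/(6e))^{8\cdot 2^j}$. I would pick target layer sizes $k_j=\lceil\epsilon\bgoodsize/(C\cdot 2^j)\rceil$ for a sufficiently large constant $C$; then $\sum_j k_j t_{j+1}^2$ is a convergent geometric series bounded by $\epsilon\bgoodsize\ln(e/\epsilon)/(2\bsize)$. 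A Chernoff bound gives $\Pr[|L_j|\ge k_j]\le(e\bgoodsize p_j/k_j)^{k_j}$; since $p_j$ decays doubly exponentially while $k_j$ only halves, this probability is super-geometrically small in $j$ for moderate $j$.

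The delicate case --- and the main obstacle --- is the largest layer index $j^*\sim\log_2(\epsilon\bgoodsize/C)$, where $k_{j^*}$ rounds up to $1$ and the Chernoff bound degenerates into the Markov-style estimate $\Pr[|L_{j^*}|\ge 1]\le\bgoodsize p_{j^*}$. Absorbing the $2^\alphsize\cdot O(\log\bgoodsize)$ union-bound cost at this scale requires $2^{j^*}\ln(6e/\epsilon)\gtrsim\alphsize+\log\bgoodsize$, which by the definition of $j^*$ reduces to $\epsilon\bgoodsize\ln(e/\epsilon)\gtrsim\alphsize$ --- precisely the stated hypothesis, with the constant $120$ chosen to absorb the various slack factors. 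Combining everything by a final union bound over $O(\log\bgoodsize)$ layers and the $2^\alphsize$ subsets yields the stated probability $\ge 1-2e^{-\alphsize}$.
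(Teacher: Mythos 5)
Your first claim follows the same template as the paper (Hoeffding for each batch, Chernoff for the count, union over $2^\alphsize$ subsets), so that part is fine.

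For the second claim you take a genuinely different route from the paper. The paper's proof sets $Y_b = (\bempprob-\subsetprobtarget)^2 - \var{\subsetprobtarget}$, observes $Y_b \sim \text{subE}(4/\bsize)$, applies Bernstein's inequality to $\sum_{b\in U_G}Y_b$ for a \emph{fixed} sub-collection $U_G$, and then union-bounds over \emph{all} sub-collections $U_G\subseteq\bgood$ of size at most $\epsilon\bgoodsize/40$ (there are at most $e^{(3/80)\epsilon\bgoodsize\ln(e/\epsilon)}$ of them, by equation~\eqref{eq:numberofsub}) and over all $2^\alphsize$ subsets $\alphsubset$. Since $\bgood^d(\alphsubset,\epsilon)$ is itself such a sub-collection by the first claim, the bound follows with no layering at all.

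Your layer-cake decomposition has a genuine gap. With $t_j=t_0\cdot 2^{j/2}$ and $k_j=\lceil\epsilon\bgoodsize/(C\cdot 2^j)\rceil$, the term-by-term bound $|L_j|\,t_{j+1}^2\le k_j t_{j+1}^2$ satisfies
\[
k_j\,t_{j+1}^2 \;\approx\; \frac{\epsilon\bgoodsize}{C\,2^{j}}\cdot t_0^2\,2^{\,j+1} \;=\; \frac{8\,\epsilon\bgoodsize\,\ln(6e/\epsilon)}{C\,\bsize},
\]
which is \emph{independent of $j$}: the factor $2^{-j}$ from $k_j$ exactly cancels the factor $2^{j+1}$ from $t_{j+1}^2$, so $\sum_j k_j t_{j+1}^2$ is not a convergent geometric series but a sum of $j^*+1\approx\log_2(\epsilon\bgoodsize/C)$ equal terms, totalling roughly $\log_2(\epsilon\bgoodsize)\cdot \epsilon\bgoodsize\ln(6e/\epsilon)/(C\bsize)$. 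To push this below $\tfrac{\epsilon}{2}\bgoodsize\ln(e/\epsilon)/\bsize$ you would need $C\gtrsim\log(\epsilon\bgoodsize)$, which is not an absolute constant. And once $C$ grows with $\bgoodsize$, the Chernoff/Markov bound at layer $j^*$ tightens its demand on the hypothesis: you need $2^{j^*}\ln(6e/\epsilon)\approx(\epsilon\bgoodsize/C)\ln(6e/\epsilon)\gtrsim\alphsize$, but with $C\sim\log\bgoodsize$ this requires $\epsilon\bgoodsize\ln(e/\epsilon)\gtrsim\alphsize\log\bgoodsize$, strictly stronger than the hypothesis $\bgoodsize\ge 120\alphsize/(\epsilon\ln(e/\epsilon))$. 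Trying instead to make $k_j$ decay like $4^{-j}$ fixes the geometric series but halves $j^*$, so then $2^{j^*}\sim\sqrt{\epsilon\bgoodsize/C}$ and the tail-layer requirement becomes $\sqrt{\epsilon\bgoodsize}\ln(6e/\epsilon)\gtrsim\alphsize$, again not implied by the hypothesis. The underlying tension --- $t_j^2$ grows geometrically in $j$ while deterministic bounds $k_j\ge 1$ must hold at every layer --- is exactly what the paper sidesteps by bounding $\sum_{b\in U_G}(\bempprob-\subsetprobtarget)^2$ uniformly over all small $U_G$ directly via sub-exponential concentration, rather than slicing by deviation magnitude.
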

\begin{proof}
The proof of the first part is the same as (with different constants) Lemma~\ref{lem:withinmean} and we skip it to avoid repetition.

To prove the second part we bound the total squared deviation of any subset of size $\le \frac{\epsilon}{40}\bgoodsize$.

Let $Y_b = (\bempprob- \subsetprobtarget)^2 - \var{\subsetprobtarget} $.
Similar to the previous lemma, for a fix sub-collection $U_{G} \subseteq \bgood$ of size $1\le |U_{G}| \le  \frac{\epsilon}{40}\bgoodsize$, Bernstein's inequality gives:
\begin{align*}
\Pr\Big[\Big|\sum_{b\in U_{G}} Y_b\Big| \ge  8{\frac\epsilon{20}\bgoodsize}{\frac{ \ln (e/\epsilon) } {\bsize}} \Big] 
&\le 2 e^{-\frac{8{\epsilon\bgoodsize}{\frac{ \ln (e/\epsilon) } {\bsize}}}{20\times 2\times 4/n}}\\
&\le 2 e^{-\frac\epsilon{20}\bgoodsize\ln(e/\epsilon)}.    
\end{align*}
From~\eqref{eq:numberofsub}, there are $ e^{\frac 3 {80}\epsilon\bgoodsize\ln (e/\epsilon)}$ many sub-collections of size $\le  \frac{\epsilon}{40}\bgoodsize$. 
Then taking the union bound for all sub-collections of this size  and all subsets $\alphsubsetdef$ we get,
\[
\Big|\sum_{b\in U_{G}} \Big((\bempprob- \subsetprobtarget)^2 - \var{\subsetprobtarget}\Big)\Big|\le {\frac{2\epsilon}{5}\bgoodsize}{\frac{ \ln (e/\epsilon) } {\bsize}},
\]
for all $U_G$ of size $\le  \frac{\epsilon}{40}\bgoodsize$. Then using the fact that $\var .$ is upper bounded by $\frac1{4\bsize}$, and therefore $|U_G|\var{\subsetprobtarget}\le \frac{\epsilon}{4\times40}\bgoodsize$, completes the proof.
\end{proof}

\subsection{Completing the proof of Lemma~\ref{lem:prophold}}

We first show condition~\ref{con} holds with high probability.

It is easy to verify that $|\subsetprobtarget- \med|\ge \sqrt{\ln 6/\bsize }  $, only if the sub-collection $T = \{b: |\subsetprobtarget- \bempprob|\ge \sqrt{\ln 6/\bsize }\}$ has at-least $0.5\btotal$ batches. But
\[
    |T| = |T\cap\bgood|+|T\cap\badv| \overset{\text{(a)}}< |\bgood|/6+|\badv| = \frac\btotal 6+\frac 5 6 |\badv| \overset{\text{(b)}}\le  \frac\btotal 6+ \frac{2 \btotal}{6} = 0.5\btotal ,
\]
where inequality (a) follows from Lemma~\ref{lem:withinmean} by choosing $\epsilon = 1/6$ and (b) follows since $\badvsize\le \advfrac\btotal\le 0.4\btotal$.

Using $\epsilon = \advfrac/6$ in Lemma~\ref{lem:22concentration} gives Condition~\ref{con2}.

Finally, we show the last condition.
To show it we use $\epsilon = \advfrac$ in Lemma~\ref{lem:lemsqdev}.
From Condition~\ref{con}, note that for $b\in \bgood\setminus \bgood^{d}(\alphsubset,\advfrac)$
\[
|\bempprob-\med| \le |\bempprob-\subsetprobtarget|+|\subsetprobtarget-\med| \le 2 \sqrt{\frac{\ln (6e/\advfrac)}\bsize}+ \sqrt{\frac{\ln 6}\bsize}\le  3\sqrt{\frac{\ln (6e/\advfrac)}\bsize},
\]
Then, for $b\in \bgood\setminus \bgood^{d}(\alphsubset,\advfrac)$, from the definition of corruption score it follows that $\indcorruption =0$. Next set of inequalities complete the proof of condition~\ref{con3}.
\begin{align*}
    \corruption(\bgood) &= \sum_{b\in \bgood}\indcorruption = \sum_{b\in \bgood\setminus \bgood^{d}(\alphsubset,\advfrac)}\indcorruption+ \sum_{b\in  \bgood^{d}(\alphsubset,\advfrac)}\indcorruption \\
    &= \sum_{b\in  \bgood^{d}(\alphsubset,\advfrac)}\indcorruption \\
    &\overset{\text{(a)}}\le \sum_{b\in  \bgood^{d}(\alphsubset,\advfrac)}(\bempprob- \med)^2\\
    &= \sum_{b\in  \bgood^{d}(\alphsubset,\advfrac)}(\bempprob- \subsetprobtarget + \subsetprobtarget -\med)^2\\
    &\overset{\text{(b)}}\le \sum_{b\in  \bgood^{d}(\alphsubset,\advfrac)}(\bempprob- \subsetprobtarget)^2+ \sum_{b\in  \bgood^{d}(\alphsubset,\advfrac)}(\med- \subsetprobtarget)^2\\
    &+ 2\sqrt{\Bigg(\sum_{b\in  \bgood^{d}(\alphsubset,\advfrac)}(\bempprob- \subsetprobtarget)^2\Bigg)\Bigg(\sum_{b\in  \bgood^{d}(\alphsubset,\advfrac)}(\med- \subsetprobtarget)^2\Bigg)}\\
    &\overset{\text{(c)}}\le \frac{\advfrac}{2}\bgoodsize {\frac{ \ln (e/\advfrac) } {\bsize}}+\frac{\advfrac}{40}\bgoodsize {\frac{ \ln 6 } {\bsize}}+\sqrt{\frac{\advfrac}{2}\bgoodsize {\frac{ \ln (e/\advfrac) } {\bsize}}\times\frac{\advfrac}{40}\bgoodsize {\frac{ \ln 6 } {\bsize}}}< {\advfrac}\bgoodsize {\frac{ \ln (e/\advfrac) } {\bsize}},
\end{align*}
here (a) follows from the definition of the corruption score, (b) uses Cauchy-Schwarz inequality  and (c) follows from Lemma~\ref{lem:lemsqdev} and Condition~\ref{con}.

\section{Proof of the other Lemmas}\label{App:proofs}
We first prove an auxiliary Lemma that will be useful in other proofs. For a given sub-collection $\Bsc$ and subset $\alphsubset$, the next lemma bounds the total squared distance of $\bempprob$ from $\subsetprobtarget$ for adversarial batches in $\Bsc$ in terms of corruption score $\genoverallcorruptionsub$.
\begin{lemma}\label{lem:sqdevadv}
Suppose the conditions~\ref{con} and~\ref{con3} holds. For subset $\alphsubset$, let $\genoverallcorruptionsub =t\cdot\kappa_G$, for some $t\ge 0$,
then
\[
(t-3-2\sqrt{t})\kappa_G\le \sum_{b\in  \Bsc\cap\badv}(\bempprob- \subsetprobtarget)^2 \le (t+17+2\sqrt{t})\kappa_G.
\]
\end{lemma}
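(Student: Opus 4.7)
The plan is to relate the corruption score with respect to the median to the target squared deviation, exploiting Condition~\ref{con} which makes $\med$ a close proxy for $\subsetprobtarget$. First I would split the total corruption by good/adversarial batches. Since $\Bsc\cap\bgood\subseteq\bgood$, Condition~\ref{con3} gives $\sum_{b\in\Bsc\cap\bgood}\indcorruption\le\kappa_G$, so the hypothesis $\genoverallcorruptionsub=t\kappa_G$ pins the adversarial contribution in the strip
\[
(t-1)\kappa_G\ \le\ \sum_{b\in\Bsc\cap\badv}\indcorruption\ \le\ t\kappa_G.
\]
Next I would partition $\Bsc\cap\badv$ into $A_0=\{b:\indcorruption=0\}$ and $A_+=\{b:\indcorruption>0\}$, and treat the two groups separately. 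Throughout I will use the book-keeping fact that $|A_0|,|A_+|\le\advfrac\btotal$ and $(\med-\subsetprobtarget)^2\le\ln 6/\bsize$ (Condition~\ref{con}), hence $\sum_{b\in A_+}(\med-\subsetprobtarget)^2\le\advfrac\btotal\cdot\ln 6/\bsize\le\kappa_G$ since $\ln 6\le\ln(6e/\advfrac)$.

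For the upper bound, the $A_0$ contribution is controlled directly by Condition~\ref{con}: for $b\in A_0$, combining $|\bempprob-\med|\le 3\sqrt{\ln(6e/\advfrac)/\bsize}$ with $|\med-\subsetprobtarget|\le\sqrt{\ln 6/\bsize}$ gives $(\bempprob-\subsetprobtarget)^2\le 16\ln(6e/\advfrac)/\bsize$, summing to at most $16\kappa_G$. For $A_+$ I would view $(\bempprob)_{b\in A_+}$, $(\med)_{b\in A_+}$, $(\subsetprobtarget)_{b\in A_+}$ as vectors in $\mathbb{R}^{|A_+|}$ and apply the $L_2$ triangle inequality,
\[
\sqrt{\textstyle\sum_{b\in A_+}(\bempprob-\subsetprobtarget)^2}\ \le\ \sqrt{\textstyle\sum_{b\in A_+}\indcorruption}+\sqrt{\textstyle\sum_{b\in A_+}(\med-\subsetprobtarget)^2}\ \le\ \sqrt{t\kappa_G}+\sqrt{\kappa_G},
\]
using $\sum_{b\in A_+}(\bempprob-\med)^2=\sum_{b\in A_+}\indcorruption\le t\kappa_G$. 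Squaring and adding the $A_0$ term yields $(t+2\sqrt{t}+1+16)\kappa_G=(t+17+2\sqrt{t})\kappa_G$, matching the claim.

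For the lower bound I would discard $A_0$ (non-negative) and apply the reverse $L_2$ triangle inequality on $A_+$,
\[
\sqrt{\textstyle\sum_{b\in A_+}(\bempprob-\subsetprobtarget)^2}\ \ge\ \sqrt{\textstyle\sum_{b\in A_+}\indcorruption}-\sqrt{\textstyle\sum_{b\in A_+}(\med-\subsetprobtarget)^2}\ \ge\ \sqrt{(t-1)\kappa_G}-\sqrt{\kappa_G},
\]
valid when $t\ge 1$; squaring gives at least $(t-2\sqrt{t-1})\kappa_G\ge(t-2\sqrt{t})\kappa_G\ge(t-3-2\sqrt{t})\kappa_G$. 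For $t<1$ the argument degenerates, but the claimed lower bound $(t-3-2\sqrt{t})\kappa_G$ is already non-positive for all $t\le 9$ (the quadratic $u^2-2u-3$ in $u=\sqrt{t}$ is $\le 0$ for $u\le 3$), so the trivial bound $\sum\ge 0$ suffices. I expect the main obstacle to be purely bookkeeping---tracking which sums range over $A_0$ versus $A_+$ versus all of $\Bsc\cap\badv$, and repeatedly invoking $\ln 6\le\ln(6e/\advfrac)$ to absorb constants---but the gap between the derived constants and the stated $(t\pm\text{const}\pm 2\sqrt{t})$ provides comfortable slack.
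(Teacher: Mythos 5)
Your proof is correct and follows essentially the same route as the paper: partition $\Bsc\cap\badv$ by whether $\indcorruption$ vanishes, bound the zero-score part directly via Condition~\ref{con}, and handle the positive-score part via the $L_2$ triangle inequality between $(\bempprob-\med)_b$ and $(\med-\subsetprobtarget)_b$ (the paper phrases this as a Cauchy--Schwarz expansion of $\sum(a+b)^2$, but it is the same estimate, and yields the identical $16\kappa_G + (t+1+2\sqrt{t})\kappa_G$ split). Your explicit fallback to the trivial bound $\sum\ge 0$ for $t\le 9$ in the lower bound is, if anything, slightly more careful than the paper's step of plugging $\corruption(\Bsc\cap\badv,\alphsubset)\ge(t-1)\kappa_G$ into the non-monotone expression $x-2\sqrt{x\kappa_G}$.
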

\begin{proof}
For the purpose of this proof, let $\Gsc = \Bsc\cap\bgood$ and $\Asc = \Bsc\cap\badv$. Then
\begin{align}\label{eq:local1}
&\sum_{b\in \Asc}(\bempprob- \subsetprobtarget)^2 =  \sum_{b\in \Asc: \indcorruption > 0}(\bempprob- \subsetprobtarget)^2+\sum_{b\in \Asc: \indcorruption = 0}(\bempprob- \subsetprobtarget)^2
\end{align}

From the definition of corruption score, for batch $b\in \Bsc$, with zero corruption score $\indcorruption$, we have $|\bempprob- \med|\le 3 \sqrt{\frac{\ln (6e/\advfrac)}\bsize}$. Then using Condition~\ref{con} and the triangle inequality, for such batches with zero corruption score, we get
\begin{equation}\label{eq:local2}
    |\bempprob- \subsetprobtarget|\le \sqrt{\ln (6)/\bsize}+ 3 \sqrt{\frac{\ln (6e/\advfrac)}\bsize}< 4 \sqrt{\frac{\ln (6e/\advfrac)}\bsize}.
\end{equation}

Next,
\begin{align}
&\sum_{b\in \Asc: \indcorruption > 0}(\bempprob- \subsetprobtarget)^2 \nonumber\\
&=\sum_{b\in \Asc: \indcorruption > 0}(\bempprob-\med+\med -\subsetprobtarget)^2 \nonumber\\
&\overset{\text{(a)}}\le \sum_{b\in \Asc: \indcorruption > 0}(\bempprob- \med)^2 + \sum_{b\in \Asc: \indcorruption > 0}(\med- \subsetprobtarget)^2\nonumber\\
&+2\sqrt{\Bigg(\sum_{b\in \Asc: \indcorruption > 0}(\bempprob- \med)^2\Bigg)\Bigg( \sum_{b\in \Asc: \indcorruption > 0}(\med- \subsetprobtarget)^2\Bigg)}\nonumber\\
&\overset{\text{(b)}}\le  \sum_{b\in \Asc}\indcorruption + \sum_{b\in \Asc}\frac{\ln 6}{\bsize}\nonumber+2\sqrt{\Bigg(\sum_{b\in \Asc} \indcorruption \Bigg)\Bigg( \sum_{b\in \Asc}\frac{\ln 6}{\bsize}\Bigg)}\nonumber\\
&\overset{\text{(c)}}\le  \corruption(\Asc,\alphsubset) + \kappa_G+2\sqrt{\corruption(\Asc,\alphsubset)\cdot\kappa_G},\label{eq:local3}
\end{align}
here (a) uses Cauchy-Schwarz inequality, (b) follows from the definition of corruption score and Condition~\ref{con}, and (c) uses $|\Asc|\le \advfrac\btotal$ and $(\advfrac\btotal\ln 6)/n\le \kappa_G$. 

A similar calculation as the above leads to the following
\begin{align}
&\sum_{b\in \Asc: \indcorruption > 0}(\bempprob- \subsetprobtarget)^2 
\ge  \corruption(\Asc,\alphsubset) -2\sqrt{\corruption(\Asc,\alphsubset)\cdot\kappa_G},\label{eq:local4}
\end{align}

Next, we show the upper bound in the lemma. Combining  equations~\eqref{eq:local1},~\eqref{eq:local2} and~\eqref{eq:local3} gives
\begin{align*}
\sum_{b\in \Asc}(\bempprob- \subsetprobtarget)^2 &\le \corruption(\Asc,\alphsubset) + \kappa_G+2\sqrt{\corruption(\Asc,\alphsubset)\cdot\kappa_G}+\sum_{b\in \Asc: \indcorruption = 0} 4 \sqrt{{\frac{\ln (6e/\advfrac)}\bsize}}\\
&\le \genoverallcorruptionsub + \kappa_G+2\sqrt{\genoverallcorruptionsub\cdot\kappa_G}+16 |\badv| {\frac{\ln (6e/\advfrac)}\bsize}\\
&\le (t+17+2\sqrt{t})\kappa_G, 
\end{align*}
here the second last inequality used $\Asc\subseteq\Bsc$ and $\Asc\subseteq\badv$. This completes the proof of the upper bound.

To prove the lower bound, we first note that 
\begin{align*}
    &\genoverallcorruptionsub = \sum_{b\in \Bsc}\indcorruption= \sum_{b\in \Gsc}\indcorruption+\sum_{b\in \Asc}\indcorruption\\
     &\le \sum_{b\in \bgood}\indcorruption+ \corruption(\Asc,\alphsubset)\\
     &\le \corruption(\bgood)+ \corruption(\Asc,\alphsubset)\le {\frac{ \advfrac \btotal\ln ({6e}/\advfrac) } {\bsize}}+\sum_{b\in \Asc}\indcorruption,
\end{align*}
here the last inequality uses condition~\ref{con3}.
The above equation implies that
\begin{align}\label{eq:intsas}
     \corruption(\Asc,\alphsubset) \ge \genoverallcorruptionsub - \advfrac\btotal {\frac{\ln (6e/\advfrac)}\bsize}= (t-1)\kappa_G.
\end{align}
By combining, equations~\eqref{eq:local1}~\eqref{eq:local4} and~\eqref{eq:intsas}, we get the lower bound
\begin{align*}
\sum_{b\in \Asc}(\bempprob- \subsetprobtarget)^2 &\ge (t-1)\kappa_G -2\sqrt{|t-1|\kappa_G\cdot\kappa_G} = (t-1-2\sqrt{|t-1|})\kappa_G \ge(t-3-2\sqrt{t})\kappa_G .
\end{align*}
\end{proof}

\subsection{Proof of Lemma~\ref{lem:corruptionandell1}}

\begin{proof}
For the purpose of this proof, let $\Gsc = \Bsc\cap\bgood$ and $\Asc = \Bsc\cap\badv$. Note that $|\Bsc| \ge |\Gsc|\ge (1-\advfrac/6)\bgood$.

Fix subset $\alphsubsetdef$. Next,
\begin{align*}
& \Uempprob-\subsetprobtarget = \frac{1}{|\Bsc|}\sum_{b\in \Bsc}\bempprob- \subsetprobtarget= \frac{1}{|\Bsc|}\sum_{b\in \Bsc}(\bempprob- \subsetprobtarget)\\
& = \frac{1}{|\Bsc|}\sum_{b\in \Gsc}(\bempprob- \subsetprobtarget) + \frac{1}{|\Bsc|}\sum_{b\in \Asc}(\bempprob- \subsetprobtarget) \\
& = \frac{|\Gsc|}{|\Bsc|} (\UGempprob-\subsetprobtarget)+ \frac{1}{|\Bsc|}\sum_{b\in \Asc}(\bempprob- \subsetprobtarget) 
\end{align*}
Therefore,
\begin{align}
 |\Uempprob-\subsetprobtarget|&\le  \frac{|\Gsc|}{|\Bsc|}  |\UGempprob-\subsetprobtarget|+  \frac{1}{|\Bsc|} \sum_{b\in \Asc}|\bempprob- \subsetprobtarget|\nonumber\\
&\overset{\text{(a)}}\le \frac \advfrac 2\sqrt{\frac{\ln (6e/\advfrac) } {\bsize}}+\frac{1}{|\Bsc|}\sum_{b\in \Asc}|\bempprob- \subsetprobtarget|\nonumber\\
  &\overset{\text{(b)}}\le \frac \advfrac 2\sqrt{\frac{\ln (6e/\advfrac) } {\bsize}}+\frac{1}{|\Bsc|} \sqrt{|\Asc|\sum_{b\in \Asc}(\bempprob- \subsetprobtarget)^2} \nonumber\\
  &\overset{\text{(c)}}\le \frac \advfrac 2\sqrt{\frac{\ln (6e/\advfrac) } {\bsize}}+\frac{1}{|\Bsc|} \sqrt{|\Asc|\cdot(t+17+2\sqrt{t})\kappa_G}\nonumber\\
   &\overset{\text{(d)}}\le \frac \advfrac 2\sqrt{\frac{\ln (6e/\advfrac) } {\bsize}}+\frac{1}{|\Bsc|} \sqrt{|\Asc|\cdot(t+17+2\sqrt{t})\frac{ \advfrac \btotal\ln ({6e}/\advfrac) } {\bsize}}\nonumber\\
   &\le \frac \advfrac 2\sqrt{\frac{\ln (6e/\advfrac) } {\bsize}}+ \sqrt{\frac{|\Asc|\cdot
   \btotal}{|\Bsc|^2}\cdot(t+17+2\sqrt{t})\frac{ \advfrac \ln ({6e}/\advfrac) } {\bsize}},\label{eq:usefulloc}
\end{align}
here in (a) uses Condition~\ref{con2} and $|\Gsc|\le |\Bsc|$, inequality (b) follows from the Cauchy-Schwarz inequality, inequality (c) uses Lemma~\ref{lem:sqdevadv}, and (d) uses the definition of $\kappa_G$.
Let $|\Asc| = |\badv|- \text{D}$, for some $\text{D}\in [0, |\badv|]$. Also from Lemma note that 
\[
|\Gsc| \ge (1-\advfrac/6)\bgoodsize = \bgoodsize - \bgoodsize\advfrac/6 = \bgoodsize - \btotal\advfrac(1-\advfrac)/6.\]
Therefore,
\begin{align*}
\frac{|\Asc|\cdot\btotal}{|\Bsc|^2}= \frac{|\Asc|\cdot\btotal}{(|\Asc|+|\Gsc|)^2} &\le \frac{(|\badv|- \text{D})\btotal}{(|\badv|- \text{D}+\bgoodsize - \btotal\advfrac(1-\advfrac)/6)^2} \\
&= \frac{(\advfrac\btotal- \text{D})\btotal}{(\btotal- \text{D} - \btotal\advfrac(1-\advfrac)/6)^2} \\
&\overset{\text{(a)}}\le \frac{(\advfrac\btotal- \text{D})\btotal}{(\btotal- \text{D} - 0.04 \btotal)^2}\\
&\overset{\text{(b)}}\le \frac{\advfrac\btotal^2}{(0.96\btotal)^2}\le \frac\advfrac{0.96^2}, 
\end{align*}
here (a) follows since $\advfrac(1-\advfrac)$ takes maximum value at $\advfrac = 0.4$ in range $\advfrac\in (0,0.4]$, and (b) follows since the expression is maximized at $\text D = 0$.

Then combining above equation with~\eqref{eq:usefulloc} gives
\begin{align}
|\Uempprob-\subsetprobtarget|&
\le \frac \advfrac 2\sqrt{\frac{\ln (6e/\advfrac) } {\bsize}}+ \sqrt{(t+17+2\sqrt{t})\frac{ \advfrac^2 \ln ({6e}/\advfrac) } {0.96^2\bsize}}\nonumber\\
&\le   \Big(1/2+\frac{1}{0.96}\sqrt{(t+17+2\sqrt{t})}\Big)\advfrac\sqrt{\frac{\ln (6e/\advfrac) } {\bsize}}\label{eq:impimp}\\
&\overset{\text{(a)}}\le   \Big(5+\sqrt{2.1 t}\Big)\advfrac\sqrt{\frac{\ln (6e/\advfrac) } {\bsize}},\label{eq:usus}
\end{align}
here inequality (a) uses the fact that $2t^{1/2}\le t+1$ and $\sqrt{x^2+y^2}\le|x|+|y|$.
Finally, using the definition of $L_1$ distance between two distributions complete the proof of the Theorem.
\end{proof}

\subsection{Proof of Lemma~\ref{lem:bounddel}}
\begin{proof}
From the second statement in Lemma~\ref{lem:alg2char}, each batch that gets removed is adversarial with probability $\ge 0.95$. Batch deletion deletes more than $0.1\advfrac\btotal$ good batches in total over all runs iff it samples $0.1\advfrac\btotal$ good batches in first $0.1\advfrac\btotal+|\badv|$ batches removed as otherwise all adversarial batches would have been exhausted already and Batch deletion algorithm would not remove batches any further. 
But the expected number of good batches sampled is 
$\le 0.05(\times0.1\advfrac\btotal+|\badv|)\le 0.005\advfrac\btotal+0.05\advfrac\btotal< 0.06\advfrac\btotal$. 

Then using the Chernoff-bound, probability of sampling (removing) more than $0.1\advfrac\btotal$ good batches in $0.1\advfrac\btotal+|\badv|$ deletions is $\le e^{-O(\advfrac\btotal)}\le e^{-O(\alphsize)}$.
Hence, with high probability the algorithm deletes less than $0.1\advfrac m =  0.6 \advfrac m/6 \le \bgoodsize\advfrac/6 $ batches.
\end{proof}

\subsection{Proof of Lemma~\ref{lem:trans}}
\begin{proof}
For the purpose of this proof, let $\Gsc = \Bsc\cap\bgood$ and $\Asc = \Bsc\cap\badv$. For batches $b$ in a sub-collection $\Bsc$, the next equation relates the empirical variance of $\bempprob$ to sum of their squared deviation from $\subsetprobtarget$.
\begin{align}
& |\Bsc|\empvarsub {\Bsc} = \sum_{b\in \Bsc}(\bempprob- \Uempprob)^2
= \sum_{b\in \Bsc}(\bempprob-\subsetprobtarget- (\Uempprob-\subsetprobtarget) )^2\nonumber\\
&= \sum_{b\in \Bsc}\Big((\bempprob- \subsetprobtarget )^2 +(\Uempprob-\subsetprobtarget)^2-2(\Uempprob-\subsetprobtarget)(\bempprob- \subsetprobtarget )\Big)\nonumber\\
&= \sum_{b\in \Bsc}(\bempprob- \subsetprobtarget )^2 +|\Bsc|(\Uempprob-\subsetprobtarget)^2-2(\Uempprob-\subsetprobtarget)\sum_{b\in \Bsc} (\bempprob- \subsetprobtarget )\nonumber\\
&= \sum_{b\in \Bsc}(\bempprob- \subsetprobtarget )^2 +|\Bsc|(\Uempprob-\subsetprobtarget)^2-2(\Uempprob-\subsetprobtarget)(|\Bsc|\Uempprob- |\Bsc|\subsetprobtarget )\nonumber\\
&= \sum_{b\in \Bsc}(\bempprob- \subsetprobtarget )^2 -|\Bsc|(\Uempprob-\subsetprobtarget)^2\nonumber\\
&= \sum_{b\in \Asc}(\bempprob- \subsetprobtarget )^2+\sum_{b\in \Gsc}(\bempprob- \subsetprobtarget)^2-|\Bsc|(\subsetprobtarget-\Uempprob)^2.\label{eq:totalvarianceeq}
\end{align}
The next set of inequalities lead to the upper bound in the Lemma.
\begin{align*}
&|\Bsc|(\empvarsub \Bsc- \var{\Uempprob})\\
&\overset{\text{(a)}}= \sum_{b\in \Asc}(\bempprob- \subsetprobtarget )^2+\sum_{b\in \Gsc}(\bempprob- \subsetprobtarget)^2-|\Bsc|(\subsetprobtarget-\Uempprob)^2- |\Bsc|\var{\Uempprob}\\
&\overset{\text{(b)}}\le (t+17+2\sqrt{t})\kappa_G+ |\Gsc|\var{\subsetprobtarget}+|\Gsc|\frac{ 6\advfrac\ln (\frac{6e}\advfrac) } {\bsize} - |\Bsc|\var{\Uempprob}\\
&\overset{\text{(c)}}\le  (t+17+2\sqrt{t})\kappa_G+6 \advfrac\btotal {\frac{\ln (6e/\advfrac)}\bsize}+ |\Bsc|\var{\subsetprobtarget} - |\Bsc|\var{\Uempprob}\\
&\overset{\text{(d)}}\le   (t+23+2\sqrt{t})\kappa_G+ \btotal\frac{|\subsetprobtarget-\Uempprob|}{\bsize},
\end{align*}
here inequality (a) follows from~\eqref{eq:totalvarianceeq}, (b) follows from Lemma~\ref{lem:sqdevadv} and condition~\ref{con2}, and (c) follows since $|\Gsc|\le|\Bsc|$ and $\var \cdot \ge 0$, and inequality (d) uses \eqref{eq:fineq} and $|\Bsc|\le \btotal$.
Next, from equation~\eqref{eq:usus} we have, 
\begin{align}
&|\Uempprob-\subsetprobtarget| \le (5+\sqrt{2.1 t}) \advfrac\sqrt{{\frac{\ln (6e/\advfrac)}\bsize} }\nonumber\\
&=(5+\sqrt{2.1 t})\advfrac \ln (6 e/\advfrac) \sqrt{ \frac{1}{\bsize\ln (6 e/\advfrac)}} \nonumber\\
&\le (5+\sqrt{2.1 t})\frac{\bsize \kappa_G}{\btotal}.\label{eq:hbsj}
\end{align}
Combining the above two equations gives the upper bound in the Lemma.

Next showing the lower bound,
\begin{align*}
&|\Bsc|(\empvarsub {\Bsc}- \var{\Uempprob})\\
&\overset{\text{(a)}}= \sum_{b\in \Asc}(\bempprob- \subsetprobtarget )^2+\sum_{b\in \Gsc}(\bempprob- \subsetprobtarget)^2-|\Bsc|(\subsetprobtarget-\Uempprob)^2- |\Bsc|\var{\Uempprob}\\
&\overset{\text{(b)}}\ge (t-3-2\sqrt{t})\kappa_G+|\Gsc|\var{\subsetprobtarget}-|\Gsc|\frac{ 6\advfrac\ln (\frac{6e}\advfrac) } {\bsize}-|\Bsc|(\subsetprobtarget-\Uempprob)^2- |\Bsc|\var{\Uempprob}\\
&\ge (t-9-2\sqrt{t})\kappa_G+|\Gsc|\var{\subsetprobtarget}-|\Bsc|(\subsetprobtarget-\Uempprob)^2- |\Gsc|\var{\Uempprob}-|\Asc|\var{\Uempprob}\\
&\ge (t-9-2\sqrt{t})\kappa_G-|\Gsc|(\var{\Uempprob}-\var{\subsetprobtarget})-|\Bsc|(\subsetprobtarget-\Uempprob)^2-|\Asc|\var{\Uempprob}\\
& \overset{\text{(c)}}\ge  (t-9-2\sqrt{t})\kappa_G- |\Gsc|\frac{|\subsetprobtarget-\Uempprob|}{\bsize} -\frac{|\Asc|}{4\bsize}-|\Bsc|(\subsetprobtarget-\Uempprob)^2\\
& \ge  (t-9-2\sqrt{t})\kappa_G- \btotal\frac{|\subsetprobtarget-\Uempprob|}{\bsize} -\frac{\advfrac\btotal}{4\bsize}-\btotal(\subsetprobtarget-\Uempprob)^2\\
& \overset{\text{(d)}}\ge  (t-15-2\sqrt{t}-\sqrt{2.1 t})\kappa_G-\btotal(\subsetprobtarget-\Uempprob)^2,
\end{align*}
here inequality (a) follows from~\eqref{eq:totalvarianceeq}, (b) follows from Lemma~\ref{lem:sqdevadv} and condition~\ref{con2}, (c) follows from \eqref{eq:fineq} and $\var \cdot \le \frac1{4\bsize}$, and inequality (d) follows from \eqref{eq:hbsj}.

Next, we bound the last tem in the above equation to complete the proof. From equation~\eqref{eq:impimp},
\begin{align*}
(\subsetprobtarget-\Uempprob)^2 &\le \Big(1/2+\frac{1}{0.96}\sqrt{(t+17+2\sqrt{t})}\Big)^2\advfrac^2{\frac{\ln (6e/\advfrac) } {\bsize}} \\
&\le \Big(1/4+\frac{1}{0.96^2}{(t+17+2\sqrt{t})}+\frac{1}{0.96}\sqrt{(t+17+2\sqrt{t})} \Big)\advfrac\cdot\kappa_G\\
&\overset{\text{(a)}}\le \Big(1/4+1.1{(t+17+2\sqrt{t})}+5+\sqrt{2.1 t} \Big)\advfrac\cdot\kappa_G\\
&\le \Big(24+1.1t+2.2\sqrt{t}+\sqrt{2.1 t}\Big)\advfrac\cdot\kappa_G\\
&\overset{\text{(b)}}\le 0.4\Big(24+1.1t+2.2\sqrt{t}+\sqrt{2.1 t}\Big)\kappa_G,
\end{align*}
here inequality (a) uses the fact that $2t^{1/2}\le t+1$ and $\sqrt{x^2+y^2}\le|x|+|y|$ and inequality (b) uses $\advfrac\le 0.4$. Combining above two equations give us the lower bound in the Lemma.
\end{proof}

\section{Proof of Theorem~\ref{th:algper}}
First, we restate the statement of the main theorem.

\begin{theorem}\label{th:algperapp}
Suppose the conditions~\ref{con}-~\ref{con3} holds. Then Algorithm~\ref{alg1} runs in polynomial time and with probability $\ge 1-O(e^{-k})$ return a sub-collection $\Bsc_{f} \subseteq\allbatches$ such that $|\Bsc_f\cap\bgood| \ge (1 -\frac\advfrac6) \bgoodsize$ and for $\targetdis^* = {\bar{\targetdis}_{\Bsc_f}}$,
\[
||\targetdis^* - \targetdis ||_1 \le  100 \advfrac \sqrt{\frac{\ln (6e/\advfrac) } {\bsize}}.
\]
\end{theorem}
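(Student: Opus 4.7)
The plan is to combine Lemmas~\ref{lem:corruptionandell1}, \ref{lem:bounddel}, \ref{lem:trans}, and \ref{lem:polalg}, which together translate \emph{small variance mismatch} into \emph{small corruption} and then into \emph{small $L_1$ error}. Writing $\Delta_{\Bsc}(\alphsubset):=\empvarsub{\Bsc}-\var{\empprob{\Bsc}(\alphsubset)}$, Algorithm~\ref{alg1} maintains a nested sequence $\allbatches=\Bsc_1\supseteq\Bsc_2\supseteq\cdots$: in each round it calls the detection algorithm on $\Bsc_i$ to obtain a nearly optimal subset $\alphsubset^*_{\Bsc_i}$ and invokes Batch-Deletion whenever $|\Delta_{\Bsc_i}(\alphsubset^*_{\Bsc_i})|>75\kappa_G$. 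Three claims must be checked: polynomial runtime, that the terminal sub-collection $\Bsc_f$ retains at least $(1-\advfrac/6)\bgoodsize$ good batches, and that $||\empprob{\Bsc_f}-\targetdis||_1\le 100\advfrac\sqrt{\ln(6e/\advfrac)/\bsize}$.

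To establish termination, I observe that whenever the while-loop continues we have $|\Delta_{\Bsc_i}(\alphsubset^*_{\Bsc_i})|>75\kappa_G$; the lower-bound half of Lemma~\ref{lem:trans}, combined with Lemma~\ref{lem:polalg}, then forces $\corruption(\Bsc_i,\alphsubset^*_{\Bsc_i})$ to comfortably exceed $20\kappa_G$, so property~3 of Lemma~\ref{lem:alg2char} guarantees that the subsequent Batch-Deletion call removes at least one batch. Hence the algorithm halts in at most $\btotal$ rounds, and since every subroutine is polynomial (Lemma~\ref{lem:polalg} for detection, inspection of Algorithm~\ref{alg2} for deletion), so is the whole algorithm. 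The retention claim $|\Bsc_f\cap\bgood|\ge(1-\advfrac/6)\bgoodsize$ then follows by applying Lemma~\ref{lem:bounddel} to the (at most $\btotal$-long) sequence of subsets fed into Batch-Deletion; this holds with probability $1-O(e^{-\alphsize})$ and supplies the hypothesis needed to invoke Lemmas~\ref{lem:corruptionandell1} and \ref{lem:trans} on $\Bsc_f$.

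For the $L_1$ bound, the exit condition gives $|\Delta_{\Bsc_f}(\alphsubset^*_{\Bsc_f})|\le 75\kappa_G$, which Lemma~\ref{lem:polalg} upgrades to $\max_{\alphsubsetdef}|\Delta_{\Bsc_f}(\alphsubset)|\le(75/0.56)\kappa_G<134\kappa_G$. Writing $\corruption(\Bsc_f,\alphsubset)=t\kappa_G$ and applying the lower-bound half of Lemma~\ref{lem:trans} uniformly over $\alphsubset$, I obtain $0.5\,t-8\sqrt{t}-25\le 134$; solving this quadratic in $\sqrt{t}$ gives $\sqrt{t}\le 8+\sqrt{382}<28$, hence $10+3\sqrt{t}<100$. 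Feeding this into Lemma~\ref{lem:corruptionandell1} yields $||\empprob{\Bsc_f}-\targetdis||_1\le(10+3\sqrt{t})\,\advfrac\sqrt{\ln(6e/\advfrac)/\bsize}\le 100\,\advfrac\sqrt{\ln(6e/\advfrac)/\bsize}$, which is the advertised bound.

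The substantive work has already been done in the preceding lemmas (sub-Gaussian concentration for the good batches, Grothendieck-style rounding for the variance-gap detector, and the corruption-to-$L_1$ translation), so the proof is essentially their careful composition. The main obstacle I anticipate is numerical bookkeeping: the thresholds $20$, $75$, and $0.56$ appearing in Batch-Deletion, the outer while-loop, and the detection algorithm are tuned precisely so that the quadratic above forces the final factor from Lemma~\ref{lem:corruptionandell1} to land below $100$; a looser choice of constants would still yield $||\empprob{\Bsc_f}-\targetdis||_1=O(\advfrac\sqrt{\ln(1/\advfrac)/\bsize})$ but with an unspecified constant.
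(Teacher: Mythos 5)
Your proposal is correct and follows essentially the same route as the paper: Lemma~\ref{lem:bounddel} for good-batch retention, the exit condition plus Lemma~\ref{lem:polalg} to get a uniform bound on $\max_S|\Delta_{\Bsc_f}(S)|$, the lower-bound half of Lemma~\ref{lem:trans} to cap the corruption $t$, and Lemma~\ref{lem:corruptionandell1} to convert to an $L_1$ bound (the paper packages the quadratic in Corollary~\ref{cor:numeric} with looser constants $150$ and $900$ to land exactly at $100$, whereas you solve it directly and get slack). One small slip: in the termination step, it is the \emph{upper}-bound half of Lemma~\ref{lem:trans} that converts $\Delta_{\Bsc_i}(\alphsubset^*_{\Bsc_i})>75\kappa_G$ into $t>25>20$ (the lower-bound half only serves to rule out $\Delta<-75\kappa_G$), and Lemma~\ref{lem:polalg} is not needed there since the loop condition already tests the returned subset directly.
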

\begin{proof}
Lemma~\ref{lem:bounddel} show that for the sub-collection $\Bsc_i$ at each iteration $i$, $|\Bsc_i\cap\bgood| \ge (1 -\frac\advfrac6) \bgoodsize$, hence, for sub-collection $\Bsc_f$ returned by the algorithm $|\Bsc_f\cap\bgood| \ge (1 -\frac\advfrac6) \bgoodsize$, with probability $\ge 1-O(e^{-k})$. This also implies that the total number of deleted batches are $< (1+1/6)\advfrac\btotal$. 

To complete the proof of the above Theorem, we state the following corollary, which is a direct consequence of Lemma~\ref{lem:trans}. 
\begin{corollary}\label{cor:numeric}
Suppose the conditions~\ref{con}-~\ref{con3} holds. Then following hold for any $\Bsc\subseteq \allbatches$ such that $|\Bsc\cap\bgood| \ge (1 -\frac\advfrac6) \bgoodsize$.
\begin{enumerate}
    \item $|\empvarsub {\Bsc}- \var{\Uempprob}| \ge 75\kappa_G$ implies that $\genoverallcorruptionsub \ge 25\kappa_G$.
    \item $|\empvarsub {\Bsc}- \var{\Uempprob}| \le 150\kappa_G$ implies that $\genoverallcorruptionsub\le 900\kappa_G$. 
\end{enumerate}
\end{corollary}

In each iteration of Algorithm~\ref{alg1}, except the last, ${Detection-Algorithm}$ returns a subset for which the difference between two variance estimate is $\ge  75\kappa_G$. The first statement in the above corollary implies that corruption is high for this subset. Batch Deletion removes batches from the sub-collection to reduce the corruption for such subset. From Statement 3 of Lemma~\ref{lem:alg2char}, in each iteration Batch Deletion removes $\ge 25\kappa_G-20\kappa_G$ batches. Since the total batches removed are $<7/6\advfrac\btotal$, this implies that the algorithm runs for at-max $\frac{7\advfrac\btotal}{6\times 5\kappa_G}< n$ iterations. 

The algorithm terminates when ${Detection-Algorithm}$ returns a subset for which the difference between two variance estimate is $\le 75\kappa_G$. Then Lemma~\ref{lem:polalg} implies that the difference between two variance estimate is $\le 150 \kappa_G$ for all subsets. Then the above corollary shows that corruption for all subsets is $\le 900\kappa_G$. Therefore, $\genoverallcorruption\le 900\kappa_G$. Then Lemma~\ref{lem:corruptionandell1} bounds the $L_1$ distance.
\end{proof}

\end{document}